\newcommand{\footremember}[2]{%
    \thanks{#2}
    \newcounter{#1}
    \setcounter{#1}{\value{footnote}}%
}
\newcommand{\footrecall}[1]{%
    \footnotemark[\value{#1}]%
} 
\title{Fast, Distribution-free Predictive Inference for Neural Networks with Coverage Guarantees}
\author{Yue Gao\footremember{uwm}{Department of Statistics,  University of Wisconsin Madison, Madison, WI, USA.},~
  Garvesh Raskutti\footrecall{uwm} ,~
Rebecca Willet\footremember{uchicagostat}{Department of Statistics,  University of Chicago, Chicago, IL, USA.}\footremember{uchicagocs}{Department of Computer Sciences,  University of Chicago, Chicago, IL, USA.}
}
\newcommand{\ind}{\perp\!\!\!\!\perp} 
\newcommand{\cA}{\mathcal{A}}
\newcommand{\cD}{\mathcal{D}}
\newcommand{\cB}{\mathcal{B}}
\newcommand{\cS}{\mathcal{S}}
\newcommand{\cL}{\mathcal{L}}
\newcommand{\cN}{\mathcal{N}}
\newcommand{\bbP}{\mathbb{P}}
\newcommand{\cM}{\mathcal{M}}
\newcommand{\cX}{\mathcal{X}}
\newcommand{\cY}{\mathcal{Y}}
\newcommand{\bX}{\mathbf{X}}
\newcommand{\bz}{\mathbf{z}}
\newcommand{\bY}{\mathbf{Y}}
\newcommand{\eg}{\xspace\textit{e.g.},\xspace}
\newcommand{\ie}{\xspace\textit{i.e.},\xspace}
\newcommand{\bbR}{\mathbb{R}}
\newcommand{\DP}{\text{DP}}
\newcommand{\lazy}{\text{LAZY}}
\newcommand{\bbE}{\mathbb{E}}
\newcommand{\ReLU}{\text{ReLU}}
\newcommand{\bbone}{\mathbbm{1}}
\newcommand{\rmj}{\backslash{j}}
\newcommand{\card}{\text{card}}
\newcommand{\Lip}{\text{Lip}}
\newcommand{\test}{\text{test}}
\newtheorem{definition}{Definition}
\newtheorem{lemma}{Lemma}
\newtheorem{theorem}{Theorem}
\newtheorem{remark}{Remark}
\newtheorem{assumption}{Assumption}
\newcommand{\aka}{\textit{aka. }}
\crefname{algocf}{alg.}{algs.}
\crefname{assumption}{Assumption }{Assumptions}
\Crefname{algocf}{Algorithm}{Algorithms}
\newcommand{\argmin}{\mathop{\mathrm{argmin}}\limits}
\newcommand{\revise}[1]{ #1}
\begin{document}

\maketitle
\begin{abstract}
This paper introduces a novel, computationally-efficient algorithm for predictive inference (PI) that requires no distributional assumptions on the data and can be computed faster than existing bootstrap-type methods for neural networks.
Specifically, if there are $n$ training samples, bootstrap methods require training a model on each of the $n$ subsamples of size $n-1$; for large models like neural networks, this process can be computationally prohibitive. In contrast, 
our proposed method
trains one neural network on the full dataset with $(\epsilon, \delta)$-differential privacy (DP) and then approximates each leave-one-out model efficiently using a linear approximation around the differentially-private neural network estimate.
With exchangeable data, we prove that our approach has a rigorous coverage guarantee that depends on the preset privacy parameters and the stability of the neural network, regardless of the data distribution. Simulations and experiments on real data demonstrate that our method 
satisfies the coverage guarantees with substantially reduced computation compared to bootstrap methods. 
\end{abstract}

% Whereas the jackknife+ requires to train $n$ number of leave-one-out models ($n$ is the number of training samples), our approach, called Lazy PI, requires to train merely one model on the full data with $(\epsilon, \delta)$-differential privacy (DP) and then approximate each leave-one-out model efficiently using a linear approximation around the full model.  

\section{Introduction }

To assess the accuracy of parameter estimates or predictions without specific distributional knowledge of the data, the idea of re-sampling or sub-sampling on the available data has been long-established to construct prediction intervals, and there is a rich history in the statistics literature on the jackknife and bootstrap methods, see ~\cite{stineBootstrapPredictionIntervals1985b, efronBootstrapMethodsAnother1979, quenouilleApproximateTestsCorrelation1949, efronLeisurelyLookBootstrap1983}.
Among these re-sampling methods, leave-one-out methods (generally referred to as ``cross-validation'' or ``jackknife'') are 
widely used to assess or calibrate predictive accuracy, and can be found in a large line of literature \citep{stoneCrossValidatoryChoiceAssessment1974b, geisserPredictiveSampleReuse1975}.

While it has been demonstrated in a large body of past work with extensive evidence that jackknife-type methods have reliable empirical performance, the theoretical properties of these types of methods are studied relatively little until recently, see \cite{steinberger2018conditional, bousquetStabilityGeneralization}.
One of the most important results among these theoretically guaranteed works is \cite{foygel2019predictive}, which introduces a crucial modification compared to the traditional jackknife method that permits rigorous coverage guarantees of at least $1 - 2 \alpha$ regardless of the distribution of the data points, for any algorithm that treats the training points symmetrically. 
We will revisit this work and give more relative details in \cref{sec:jackknife+}.

Although theoretically jackknife+ has been proven to have coverage guarantees without distributional assumptions, in practice, this method is computationally costly, since we need to train $n$ (which is the training sample size) leave-one-out models from scratch to find the predictive interval. Especially for large and complicated models like neural networks, this computational cost is  prohibitive. The goal of this paper is to provide a \textit{fast} algorithm that provides similar theoretical coverage guarantees to those in jackknife+.

To achieve this goal, we develop a new procedure, called Differentially Private Lazy Predictive Inference (DP-Lazy PI), which combines two ideas: lazy training of neural networks and differentially private stochcastic gradient descent (DP-SGD).  To accelerate the procedure, we introduce a lazy training scheme inspired by \citet{chizat2020lazy,pmlr-v162-gao22h} to train the leave-one-out models. The intuition is that with data exchangeability, the leave-one-out models should be quite close or similar to each other and there is no need to train each one from scratch (\ie random initialization). Instead, we  first train a model on the full data and use this model as a good initialization. By using DP-SGD as the initializer for our full model, we are able to provide coverage guarantees since the privacy mechanism prevents information leakage across leave-one-out estimators. In particular, we prove that our DP-Lazy PI procedure has a coverage of at least $1- 2\alpha - 3\sqrt{2\eta + 2\epsilon +\delta}$ where $\eta$ represents an out-of-sample stability parameter and $(\epsilon, \delta)$ are the differential privacy parameters. Empirically, we show through simulations and real-data experiments that our method has significant advantage over the existing jackknife+ method in run-time while still maintaining good coverage.

\section{Preliminaries}

We first define key notation. For any values $v_i$ indexed by $i\in [n]$, define 
\begin{equation}\label{def: quantile}
    Q_{\alpha}^{+}\left(\{v_i\}_{i\in[n]}\right) = \lceil (1-\alpha)(n+1)\rceil^{\rm th} \text{ smallest value of } v_1, \dots, v_n,
\end{equation}
\ie the ${1-\alpha}^{\rm th}$ quantile of the empirical distribution of these values; Similarly, 
\begin{equation}    
    Q_{\alpha}^{-}\left(\{v_i\}_{i\in[n]}\right) = \lfloor (\alpha)(n+1)\rfloor^{\rm th} \text{ smallest value of } v_1, \dots, v_n = - Q_{\alpha}^{+}\left(\{-v_i\}_{i\in[n]}\right)
\end{equation}
is the $\alpha^{\rm}$ quantile of the empirical distribution.

\subsection{Distribution-free Prediction Intervals}
\label{section: jackknife+}

Suppose we have training data $(X_i, Y_i)\in \cX\times \cY$ for $i = 1, \dots, n$, and a new test point $(X_{n+1}, Y_{n+1})$, $X_i\in \bbR^p, ~Y_i\in \bbR$ drawn independently from the same distribution. 
We fit a regression model to the training data, \ie a function $\hat{f}: \cX\mapsto \cY$ where $\hat{f}(x)$ predicts $Y_{n+1}$ given a new feature vector $X_{n+1} = x$, and then provide a prediction interval centered around $\hat{f}(X_{n+1})$ for the test point. 
Specifically, given some target coverage level $1-\alpha$, we aim to construct a prediction interval $\hat{C}_{n, \alpha}$, such that 
$\bbP\left[ Y_{n+1} \in \widehat{C}_{n, \alpha}(X_{n+1})\right]   \geq 1- \alpha.$
We call the probability $\bbP\left[ Y_{n+1} \in \widehat{C}_{n, \alpha}(X_{n+1})\right]$ as the coverage, and  $\text{len}[\widehat{C}_{n, \alpha}(X_{n+1})]$  as the interval width of $\widehat{C}_{n, \alpha}(X_{n+1})$, which is the distance between the left and right endpoints.
\paragraph{Na\"ive Prediction Interval}
A na\"ive way to construct a predictive interval at the new test point $(X_{n+1}, Y_{n+1})$ is to center the interval at $\hat{f}(X_{n+1})$ and estimate the margin (\ie half of the interval width) from the training residuals $|Y_i - \hat{f}(X_{i})|, ~i = 1, \dots, n$.
Therefore, a na\"ive prediction interval can be constructed as:
\[
    \widehat{C}_{n,\alpha}^{\text{na\"ive}}(x) \coloneqq 
    \left[
        \hat{f}(x) \pm Q_{\alpha}^+ \left(\{|Y_i - \hat{f}(X_{i})|\}_{i\in [n]}\right)
    \right].
    \]
where $Q_{\alpha}^+ \left(\{|Y_i - \hat{f}(X_{i})|\}_{i\in [n]}\right)$ is defined as \eqref{def: quantile}.
Due to the problem of overfitting when the training errors are typically smaller than the test errors, this na\"ive interval $\widehat{C}_{n, \alpha}(x)$ may  undercover---\ie  the probability that $Y_{n+1}$ falls outside the interval can be larger than $\alpha$, as discussed in \cite{foygel2019predictive}.

\paragraph{Jackknife Prediction Interval}
To avoid undercoverage due to model overfitting, the \emph{jackknife} method estimates the margin of errors 
is estimated by leave-one-out residuals instead of training residuals \citep{steinberger2016leave, steinberger2018conditional}.
The idea is straightforward: for each $j = 1, \dots, n$, we fit a regression function $\hat{f}_{-j}$ using all the training data except the $j$-th training sample. Based on these leave-one-out models $\hat{f}_{-1},\dots, \hat{f}_{-n}$, we can therefore compute the leave-one-out residuals: $|Y_1 - \hat{f}_{-1}(X_1)|, \dots, |Y_n - \hat{f}_{-n}(X_n)|$. 

With the leave-one-out residuals as well as the regression function $\hat{f}$ fitted on the full training data, the jackknife prediction interval is constructed as:
\[
\widehat{C}_{n, \alpha}^{\text{jackknife}}(x)
\coloneqq \left[
    \hat{f}(x) \pm Q_{\alpha}^{+}
    \left(\{|Y_i - \hat{f}_{-i}(X_{i})|\}_{i\in [n]}\right)
\right].
\]

\paragraph{Jackknife+ Prediction Interval}
\label{sec:jackknife+}

The jackknife+ is a modification of jackknife, and both of these methods use the leave-one-out residuals when constructing the prediction intervals. The difference is that the jackknife interval is centered at the predicted value $\hat{f}(X_{n+1})$ (where $\hat{f}$ is fitted on the full training data), whereas jackknife+ uses the leave-one-out predictions $\hat{f}_{-i}(X_{n+1}), i = 1, \dots, n$ instead to the build the prediction interval:
\begin{align*}
    &\widehat{C}^{\text{jackknife+}}_{n, \alpha}(x) \\
    & \qquad \coloneqq
    \left[
    Q_{\alpha}^{-}\left( \left\{
        \hat{f}_{-i}(x) - 
        |Y_i - \hat{f}_{-i}(X_{i})|\right\}_{i\in [n]}\right)
    ,~~
    Q_{\alpha}^{+}\left( \left\{
        \hat{f}_{-i}(x) + 
        |Y_i - \hat{f}_{-i}(X_{i})|\right\}_{i\in [n]}\right)
    \right].
\end{align*}
\citet{foygel2019predictive} provides the non-asymptotic coverage guarantee for jackknife+ without assumptions beyond the training and test data being exchangeable. Theoretically, a distribution-free coverage guarantee of at least $1-2\alpha$ is provided, and it's observed that the method can achieve $1-\alpha$ coverage in empirical studies.
\paragraph{Summary of coverage guarantees and computational costs}

The coverage guarantees and the computational cost  of the above methods are summarized in \cref{table: summary_pi}. Concretely computational cost refers to the computation at two stages, model training and evaluation and we measure cost in terms of number of models trained during the training phase and number of calls to the trained function in the evaluation stage.

\begin{table}[ht]
    \centering
    \caption{Coverage guarantees and computation costs of existing methods for estimating prediction interval. Here $n$ is the training sample size; $n_{\text{test}}$ is the number of test points for which we seek prediction intervals.
    } 
   \adjustbox{max width = \textwidth}{\begin{tabular}{l|cc|cc}
    \hline
              & \multicolumn{2}{c|}{Coverage Guarantee}     & \multicolumn{2}{c}{Computation} \\ \cline{2-5} 
              & Distribution-free Theory & Empirical          & \# models trained          & \# calls to trained function         \\ \hline
    \textbf{Na\"ive    } & No guarantee           & $<1-\alpha$        & $1$                 & $n+n_{\text{test}}$         \\ \hline
    \textbf{jackknife } & No guarantee           & $\approx 1-\alpha$ & $n$                 & $n+n_{\text{test}}$         \\ \hline
    \textbf{jackknife+} & $\geq 1-2\alpha$       & $\approx 1-\alpha$ & $n$                 & $n \cdot n_{\text{test}}$         \\ \hline
    \end{tabular}}
    \label{table: summary_pi}
\end{table}

Clearly the cost of model training is the largest cost, especially when training a complicated model such as a neural network. Our method attempts to only train a single model (like the na\"ive model) while still providing a coverage guarantee (like jackknife+).

\section{Method and Algorithm}

Our method involves using the lazy estimation framework~\citep{chizat2020lazy} with an initialization using differential privacy stochastic gradient descent (DP-SGD, \citet{Abadi_2016}). 

\subsection{Lazy Estimation}

To calculate the predictive interval fast and reduce the computational cost, we consider using a lazy estimation scheme when estimating the leave-one-out (LOO) models for calculating the predictive interval. The key idea of our approach is given an initialization parameterized by $\theta_0$, for each $i$, we approximate $\hat f_{-i}$ by linearizing around initial base model and training the linearized model using all but the $i^{\rm th}$ training sample. Thus instead of training $n$ neural networks explicitly, we train a single neural network followed by $n$ linear models, leading to a substantial computational saving. 

For any dataset $\cD = \{(X_i, Y_i)\}$ and a large model $f(x; ~\theta)$, where $\theta$ denotes the model parameters, we define a lazy estimation operator
$\lazy_{\lambda} (\theta; \cD): \Theta\times \cD \mapsto \Theta\subset \bbR^M$ with the ridge regression parameter $\lambda > 0$  
and initialization $\theta_0$, we estimate the LOO models by taking a linearization around $\theta_0$ and minimize the penalized training loss on data $\cD_{\rmj} = {\{(X_i, Y_i)\}}_{i\in [n]\backslash [j]}$, 
\begin{equation}
    \lazy_{\lambda} (\theta_0; \cD_{\rmj})=  
    \theta_0  + \underset{\Delta\in \bbR^M}{\argmin} \left\{\sum_{i \in [n]\backslash \{j\}} \cL
    \left(Y_i, f(X_i, \theta_0)+ \Delta^\top \nabla_{\theta} f(X_i; \theta)|_{\theta = \theta_0}\right) + \lambda \|\Delta\|^2
    \right\}.
    \label{eq: lazy estimate}
\end{equation}

\subsection{Differential Privacy (DP) initialization}

One of the important questions with lazy estimation is how to choose the initialization $\theta_0$. For this we use the concept of \emph{differential privacy}, which allows us to achieve coverage guarantees with a large reduction in computational costs. 

\begin{definition}[Differential Privacy~\citep{Dwork2009}]
\label{def:dp}
    A randomized mechanism $\cM: \mathcal{D} \mapsto \mathcal{R}$ with domain $\mathcal{D}$ and range $\mathcal{R}$ satisfies \textit{$(\epsilon, \delta)$-differential privacy} if for any two adjacent datasets $d, d' \subseteq \mathcal{D}$ and for any subset of outputs $S\subseteq \mathcal{R}$ it holds that 
    \begin{equation}
        P\left(\cM(d) \in S \right) \leq
        e^{\epsilon} P\left(
        \cM(d') \in S
        \right) + \delta.
    \end{equation}
\end{definition}

In essence, this differential privacy (DP) condition ensures that the output distribution does not change much when the input data has some small change, which makes it hard to distinguish between the input databases on the basis of the output. 
\citet{Abadi_2016} proposes \cref{alg: dp sgd}, which performs stochastic gradient descent with noisy gradients and showed that it achieves $(\epsilon,\delta)$-differential privacy, where noise is added to the gradients in the scale of $O(\frac{\sqrt{\log{\delta}}}{n\epsilon})$. Algorithms to ensure DP are always designed based on the sensitivity of the original algorithm.
More precisely, given an algorithm $\cA$ and a norm function $\|\cdot\|$ over the range of $\cA$, the sensitivity of $\cA$ is defined as 
\begin{align}\label{eq: sensitivity}
    s(\cA) = \max_{\substack{d(D, D')=1, \card(D)=n}} \|\cA(D) - \cA(D')\|
\end{align}
The Laplacian mechanism to construct a differentially private algorithm is as follows (see \citep{koufogiannis2015optimality, holohan2018bounded}): for an algorithm (or function) $\cA: \cD\mapsto \Theta\subset \bbR^M$, the random function $\cA^{\DP}(d) = \cA(d) + \xi, ~\forall d \in \cD$ satisfies $(\epsilon,0)$-differential privacy, where the elements $\xi_i$ follows a Laplacian distribution $\text{Lap}(0, s(\cA)/\epsilon)$:
% where $\xi_i, i=1, \dots, M$ are independent Laplacian $\text{Lap}(0, s(\cA)/\epsilon)$ random variable with the probability density function 
$p(\xi) \propto e^{-\epsilon |\xi|/ s(\cA)}$.
We can set $\epsilon$ and $\delta$ as small constants (\eg $\epsilon = 0.01$ and $\delta = 10^{-3}$), and adjust the noise levels in different DP mechanisms.

\begin{algorithm}[ht]
    \caption{DP-SGD~\citep{Abadi_2016}}\label{alg: dp sgd}
    \KwParameters{Noise scale $\sigma$; learning rate $\eta_t$;
    group size $l$, gradient norm bound $C$; number of interactions $T$.}
    \KwData{$\cD =  \{Z_1, \dots, Z_n\}=\{(X_1, Y_1), \dots, (X_n, Y_n)\} $;} 
    \KwInit{$\theta_0$ randomly;}
    \For{$t\in [T]$}{
        Take a random sample $L_t$ with sampling probability $l/n$;\\
        For each $i\in L_t$,
        compute the gradient $g_t(Z_i) \leftarrow \nabla_{\theta_t} \cL(\theta_t; Z_i)$;\\
        Clip gradient: $\bar{g}_t(Z_i) = g_t(Z_i) / \max(1, \frac{\|g_t(Z_i)\|_2}{C})$;\\
        Add Noise: $\tilde{g}_t(Z_i) \leftarrow \frac{1}{|L_t|}(\sum_i \bar{g}_t(Z_i) + \cN(0, \sigma^2C^2 I))$;\\
        Descent: $\theta_{t+1} \leftarrow \theta_t - \eta_t \tilde{g}_t$.
    }
    Output $\theta_T$ with the overall privacy cost $(\epsilon, \delta)$ using a privacy accounting method.
\end{algorithm}

\subsection{Our method: Lazy Estimation with Differential Privacy}

First, we use a noisy SGD method~\citep{Abadi_2016} with $(\epsilon, \delta)$ differential privacy to get a full model parameter estimation  $\widehat \theta^{\text{DP}}_n({\epsilon, \delta})$, which for convenience will be denoted as $\widehat \theta_n^{\text{DP}}$. Fig.~\ref{fig:method_outline} provides an overview of our procedure.

\begin{figure}
    \centering
    \label{fig:method_outline}
    \includegraphics[width=0.9\textwidth]{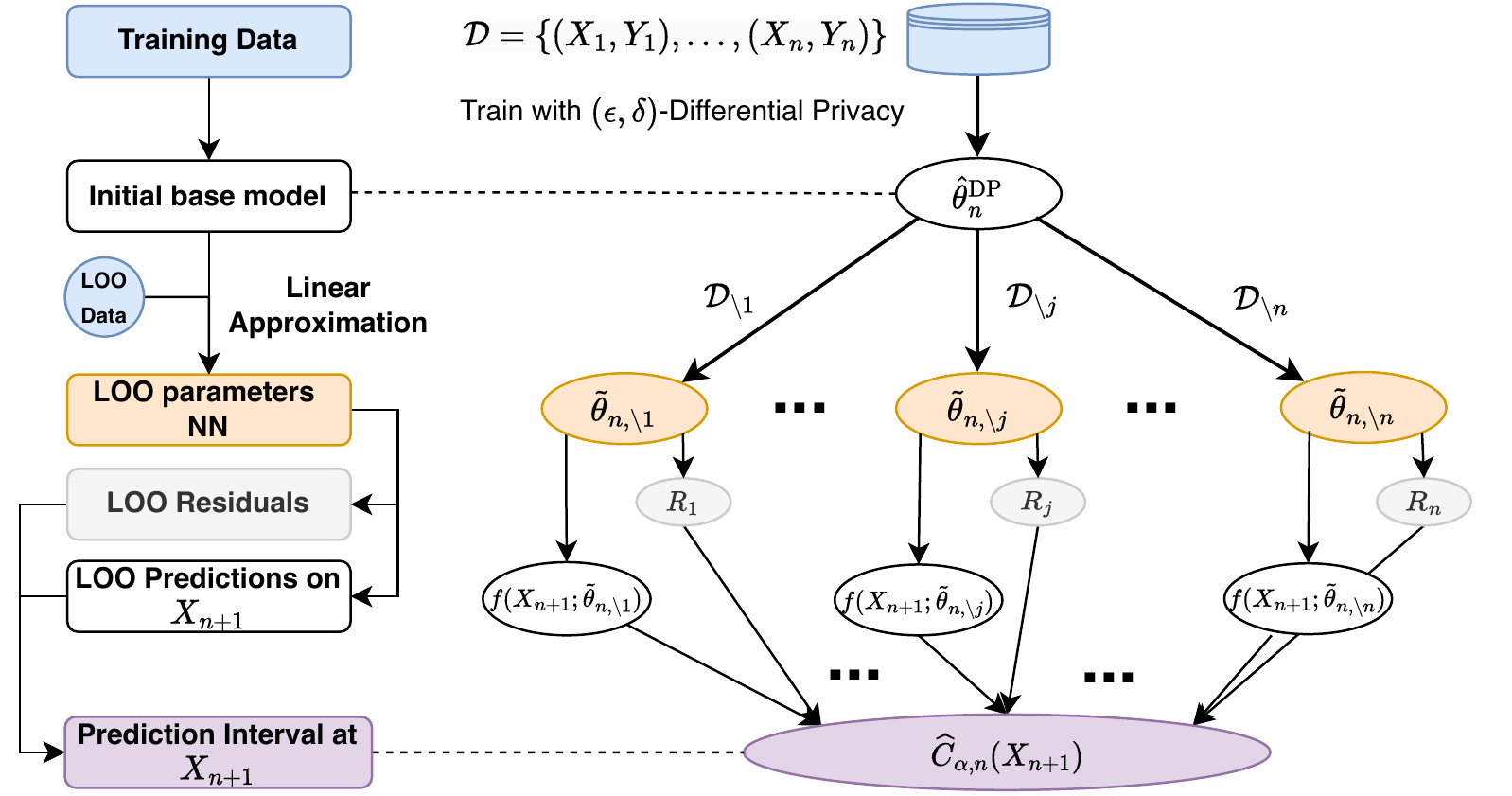}
    \caption{The outline of DP-Lazy PI method to calculate prediction intervals}
\end{figure}

Based on this $\widehat \theta_n^{\text{DP}}$, we estimate the LOO model parameters by using lazy training to take a linearization around $\widehat \theta_n^{\text{DP}}$ and minimize the penalized training loss on data $\cD_{\rmj} = {\{(X_i, Y_i)\}}_{i\in [n]\backslash [j]}$, 
\begin{equation}
    \widetilde{\theta}_{n, \rmj}= \lazy_{\lambda} (\widehat{\theta}^{\DP}_n; ~\cD_{\rmj}).
\end{equation}
Here $\lazy_{\lambda} (\cdot; \cdot)$ is defined in \eqref{eq: lazy estimate}.
Based on the LOO model parameters, we can calculate the LOO residuals by 
\[ R_j := |Y_j - f(X_j;~ \widetilde{\theta}_{n,\rmj})|.\]

For a test data $X_{n+1}$, its predictive interval using our method is:
\begin{equation}
    \label{eq: define_relaxed_ci}
    \widehat{C}_{\alpha, n}^{\nu} (X_{n+1}) \coloneqq
\left[Q_{\alpha}^{-}\left\{f(X_{n+1};~ \widetilde{\theta}_{n,\rmj}) 
- R_j \right\} - \nu,
~Q_{\alpha}^{+}\left\{f(X_{n+1};~ \widetilde{\theta}_{n,\rmj}) + 
 R_j\right\} + \nu \right]
\end{equation}
where $\nu$ is a relaxation term that is close to 0.
Our full method is presented in \cref{alg: lazy_pi_dp}.

 \begin{algorithm}[ht]
    \caption{DP-Lazy PI}\label{alg: lazy_pi_dp}
    \KwParameters{DP parameters ($\epsilon, \delta$); Ridge Parameter $\lambda$; Relaxation Parameter $\nu$;}
    \KwData{Training samples $\cD = \{(X_1, Y_1), \dots, (X_n, Y_n)\subseteq \cX\times \cY\}$; test sample $X_{n+1}$;} 
    \KwResult{Predictive Interval $\widehat{C}_{\alpha,n}^{\text{Lazy}}$;}
    Compute NN weights $\widehat\theta_n^{\DP}$: 
    \[ \widehat\theta_n^{\DP} \leftarrow \text{DP-SGD}(\cD; (\epsilon, \delta)).\]\\
    \For{$j\in 1, \dots, n$}{
        Compute $\widetilde\theta_{n, \rmj}$ by Lazy method defined in \eqref{eq: lazy estimate} on data $\cD \backslash \{(X_j, Y_j)\}$ initialized at $\widehat{\theta}_n^{\DP}$ 
        \[\widetilde\theta_{n, \rmj} \leftarrow \lazy_{\lambda}\left(\widehat{\theta}_{n}^{\DP}; \cD_{\rmj}\right). \]
         \\ 
        Compute the LOO residuals \[R_j \leftarrow |Y_j - f(X_j; \widetilde{\theta}_{n, \rmj})|.\]
    }
    Compute the DP-Lazy PI at $X_{n+1}$:
        \[\widehat{C}^{\nu}_{\alpha, n} (X_{n+1}) = 
        [Q_{\alpha}^{-}\{f(X_{n+1}; \widetilde{\theta}_{n,\rmj}) - R_j\} -\nu,
         ~Q_{\alpha}^{+}\{f(X_{n+1}; \widetilde{\theta}_{n,\rmj}) + R_j\} + \nu] .\]
 \end{algorithm}

\section{Coverage Guarantee}

Recall that $\widehat{\theta}^{\DP}_{n}$ is the 
NN parameter estimate from $(\epsilon, \delta)$-DP algorithm using the full training data 
$\cD = \{Z_1, \dots, Z_n\} = \{(X_1,Y_1), \dots, (X_n, Y_n)\}$.
For any $j\in [n]$, define $\widehat \theta_{\rmj}^{\DP}$ as 
NN parameter estimate from an $(\epsilon, \delta)$-DP algorithm $\cA^{\DP}$ 
 trained with the $j$-th sample deleted, \ie using data
$\cD_{\rmj} = \cD \backslash \{Z_j\}$.
Based on $\widehat \theta_{\rmj}^{\DP}$, we define another lazy estimate:
\begin{equation}
    \label{eq: exchangeable approx}
    \widetilde{\theta}_{\rmj, \rmj}= 
    \widehat{\theta}^{\DP}_{\rmj}  + \arg\min_{\Delta \theta} \left\{\sum_{i \in [n]\backslash \{j\}} \mathcal{L}
    \left(Y_i, f(X_i, \widehat{\theta}^{\DP}_{\rmj})+ \Delta \theta^\top \nabla_{\theta} f(X_i; \theta)|_{\theta = \widehat{\theta}^{\DP}_{\rmj}}\right) + \lambda \|\Delta \theta\|^2
    \right\}.
\end{equation}

\begin{theorem}\label{thm: main_result}
    For any $\nu>0$ and $\eta>0$, such that
    \begin{equation}\label{eq: stable condition}
        \bbP\left(|f(X_{n+1};~ \widetilde{\theta}_{n,\rmj}) -
        f(X_{n+1};~ \widetilde{\theta}_{\rmj,\rmj})| > \nu/2\right) \leq \eta, 
    \end{equation}
    the coverage of the DP-Lazy prediction interval 
    $\widehat{C}_{\alpha, n}^{\nu} (x)$ 
    defined in \cref{eq: define_relaxed_ci} is 
    larger than $1-2\alpha - 3\sqrt{2\eta + 2\epsilon
    + \delta}$, where $(\epsilon, \delta)$ are the DP parameters, \ie
    \begin{equation}
        \bbP\left[ Y_{n+1}\in \widehat{C}_{\alpha, n}^{\nu} (X_{n+1})\right] \geq 
        1-2\alpha - 3\sqrt{2\eta + 2\epsilon + \delta}.
    \end{equation}
\end{theorem}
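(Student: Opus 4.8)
The plan is to view the interval we can actually compute as a perturbation of an idealized procedure that is genuinely symmetric in the retained data, and then pay for the perturbation with the two quantities $\eta$ and $(\epsilon,\delta)$. Concretely, I would take the oracle leave-one-out predictor $f(\cdot;\widetilde{\theta}_{\rmj,\rmj})$ from \eqref{eq: exchangeable approx}, which is the output of a single map $\cA$: run DP-SGD on $\cD_{\rmj}$ to obtain $\widehat{\theta}^{\DP}_{\rmj}$ and then lazy-fit on $\cD_{\rmj}$. Since both DP-SGD and the ridge-type lazy fit are invariant to the ordering of their inputs (their auxiliary randomness being independent of the data), $\cA$ is a symmetric algorithm, so the oracle residuals $\tilde{R}_j=|Y_j-f(X_j;\widetilde{\theta}_{\rmj,\rmj})|$ together with the comparison values $f(X_{n+1};\widetilde{\theta}_{\rmj,\rmj})\pm\tilde{R}_j$ form an exchangeable array. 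Invoking \citet{foygel2019predictive} for this oracle construction yields the baseline $\bbP[Y_{n+1}\in\widetilde{C}]\geq 1-2\alpha$, where $\widetilde{C}$ is the un-relaxed jackknife+ interval built from the oracle quantities. This is the backbone; everything else is bounding the gap between $\widetilde{C}$ and the reported $\widehat{C}^{\nu}_{\alpha,n}$ of \eqref{eq: define_relaxed_ci}.

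Next I would show that $\widehat{C}^{\nu}_{\alpha,n}$ differs from $\widetilde{C}$ only through the substitution of $\widetilde{\theta}_{n,\rmj}$ for $\widetilde{\theta}_{\rmj,\rmj}$, and that the relaxation $\nu$ is sized exactly to absorb prediction-level differences of magnitude $\nu/2$. At the test point, condition \eqref{eq: stable condition} directly bounds $|f(X_{n+1};\widetilde{\theta}_{n,\rmj})-f(X_{n+1};\widetilde{\theta}_{\rmj,\rmj})|\leq \nu/2$ off an event of probability $\eta$. The subtlety is that I also need the matching control at each held-out point $X_j$, so that $|R_j-\tilde{R}_j|\leq\nu/2$ and the $\nu$-widened quantile interval contains $\widetilde{C}$. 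This is where the two tools must be combined: unlike $X_{n+1}$, the point $X_j$ \emph{does} influence $\widetilde{\theta}_{n,\rmj}$ through the full-data initialization $\widehat{\theta}^{\DP}_n$, so $X_j$ is not literally out-of-sample for the actual predictor and the test-point stability bound does not transfer for free.

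To neutralize that influence I would invoke \cref{def:dp}: $\widehat{\theta}^{\DP}_n$ and $\widehat{\theta}^{\DP}_{\rmj}$ are $(\epsilon,\delta)$-DP outputs on the adjacent datasets $\cD$ and $\cD_{\rmj}$, so their laws (and those of the downstream fixed functions $\widetilde{\theta}_{n,\rmj}$ and $\widetilde{\theta}_{\rmj,\rmj}$) satisfy $\bbP(\cdot\in S)\leq e^{\epsilon}\bbP'(\cdot\in S)+\delta$ in both directions. This lets me replace $\widetilde{\theta}_{n,\rmj}$ by the $Z_j$-free $\widetilde{\theta}_{\rmj,\rmj}$ inside probability statements at the cost of the DP budget; against $\widetilde{\theta}_{\rmj,\rmj}$ the point $X_j$ is out-of-sample and identically distributed to $X_{n+1}$ relative to $\cD_{\rmj}$, so the held-out stability bound follows from \eqref{eq: stable condition} up to the $(\epsilon,\delta)$ charge. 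I would then aggregate the two stability events (contributing $2\eta$) and the two-sided DP comparison of the two initializations (contributing $2\epsilon+\delta$) into a single nonnegative discrepancy statistic whose expectation is at most $2\eta+2\epsilon+\delta$, and convert this into a coverage loss by thresholding at level $\sqrt{2\eta+2\epsilon+\delta}$ (Markov on one side, bounded loss on the other). Carrying this through the two interval endpoints and the recentering accounts for the leading factor $3$ and yields $1-2\alpha-3\sqrt{2\eta+2\epsilon+\delta}$.

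The main obstacle is precisely this aggregation-and-conversion step. The per-index stability bound (an $\eta$ over data and randomness) and the worst-case distributional DP bound (an $\epsilon,\delta$) must be combined across all $n$ leave-one-out comparisons and across both endpoints \emph{without} paying a factor of $n$, which forces the argument to run through exchangeability of the entire comparison array rather than through index-by-index high-probability events; it is the passage from the resulting expectation/total-variation-type bound to a quantile-level coverage statement that introduces the square root. Getting the constants to land exactly as $2\eta+2\epsilon+\delta$ with leading coefficient $3$ — in particular checking that the out-of-sample role of $X_j$ genuinely matches the stated test-point condition \eqref{eq: stable condition} once the $Z_j$-dependence is removed by DP, and that the two-sided DP inequality contributes exactly $2\epsilon+\delta$ — is the delicate part, while the remainder is the standard jackknife+ counting argument together with routine bookkeeping.
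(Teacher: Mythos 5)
Your proposal is correct and follows essentially the same route as the paper's proof: a reference jackknife+ interval built from $\widetilde{\theta}_{\rmj,\rmj}$ with coverage from \citet{foygel2019predictive}, a containment-failure bound obtained via the counting argument, exchangeability, and Markov's inequality (avoiding any factor of $n$), the stability condition \eqref{eq: stable condition} for the test-point term, a DP-based lemma converting the in-sample ($X_j$) discrepancy to the out-of-sample one at cost $2\epsilon+\delta$, and finally the quantile-shift optimization $\alpha'=\alpha+\sqrt{2\eta+2\epsilon+\delta}$ producing the square root and the factor $3$. The only slight imprecision is your attribution of the factor $3$ to the two endpoints and recentering — it actually arises as $2\sqrt{B}$ from the level shift in the reference interval's $1-2\alpha'$ coverage plus $\sqrt{B}$ from the Markov bound — but this is bookkeeping, not a gap.
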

\begin{remark}
    In \cref{thm: main_result}, essentially \eqref{eq: stable condition} requires an \emph{out-of-sample} stability in the DP algorithm, which appears elsewhere in the literature \cite{foygel2019predictive} and is referred to as ``hypothesis stability'' in \cite{bousquetStabilityGeneralization}. We want to emphasize that this out-of-sample stability only requires that, for a test data point that is \emph{independent of the training data}, the predicted value does not change much if we remove one point in the training data set, which can hold even for algorithms that suffer from strong overfitting in contrast to the \emph{in-sample} stability. As an illustrative example, the K-nearest-neighbor algorithm is shown to satisfy the out-of-sample stability with $\nu=0$ and $\eta=K/n$ (see Example 5.5 in \cite{foygel2019predictive}).

    For neural networks, notions of stability are also assumed in the literature \citep{verma2019stability, forti1994necessary}.
    In the Supplementary Material, we also give an example showing that if $x$ is multivariate Gaussian, for a two-layer neural network (one hidden layer) with activation functions like ReLU, sigmoid or tanh and $s(\cA) = O(1/n)$ (defined in \cref{eq: sensitivity}),  \eqref{eq: stable condition} holds true with $\nu = O(1/n)$ up to some log terms and $\eta = O(e^{-\epsilon n})$ for some constant $c$.

\end{remark}

\subsection{Proof Overview of \texorpdfstring{\cref{thm: main_result}}{}}
In this section, we'll provide the key ideas and the skeleton of the proof for the coverage guarantee, while the completed proof is postponed in the Appendix.
Recall
 $R_j := |Y_j - f(X_j;~ \widetilde{\theta}_{n,\rmj})|$
 and define
  $R_{j,\rmj} := |Y_j - f(X_j;~ \widetilde{\theta}_{\rmj,\rmj})|.$

First, for any $\alpha<\alpha'<1$ we show that a jackknife+ type interval $\widetilde{C}_{\alpha', n} (X_{n+1})$ defined as 
\begin{equation}
\widetilde{C}_{\alpha', n} (x) \coloneqq
\left[Q_{\alpha'}^{-}\left\{f(x;~ \widetilde{\theta}_{\rmj,\rmj}) - 
R_{j,\rmj}\right\} ,
~Q_{\alpha'}^{+}\left\{f(x;~ \widetilde{\theta}_{\rmj,\rmj}) + 
R_{j,\rmj}\right\} \right]
\end{equation}
is contained within our rapidly calculated prediction interval $\widehat{C}^{\nu}_{\alpha, n} (X_{n+1})$ with high probability that depends on the relaxation term $\eta$ and the DP parameters $(\epsilon, \delta)$: 
{
\allowdisplaybreaks
\begin{align}
    \bbP&\left[
        \widetilde{C}_{\alpha', n} (X_{n+1}) \nsubseteq 
        \widehat{C}_{\alpha, n}^{\nu} (X_{n+1})
    \right]\\
    &= \bbP\Bigg[
    \left\{Q_{\alpha}^{+}\left\{f(X_{n+1};~ \widetilde{\theta}_{n,\rmj}) + 
    R_j + \nu\right\}
    < Q_{\alpha'}^{+}\left\{f(X_{n+1};~ \widetilde{\theta}_{\rmj,\rmj}) + 
    R_{j,\rmj}\right\}\right\}\label{eq: def}
    \\
    &\qquad \qquad 
        \cup\left\{Q_{\alpha}^{-}\left\{f(X_{n+1};~ \widetilde{\theta}_{n,\rmj}) - 
        R_j - \nu\right\}
        > Q_{\alpha'}^{-}\left\{f(X_{n+1};~ \widetilde{\theta}_{\rmj,\rmj}) - 
        R_{j,\rmj}\right\}\right\}
        \Bigg]\nonumber\\
    & \leq  \bbP \Bigg[
        \Bigg\{ \sum_{j=1}^n \bbone\Big(
        \left|f(X_{n+1};~ \widetilde{\theta}_{\rmj,\rmj})
        -f(X_{n+1};~ \widetilde{\theta}_{n,\rmj})\right|\nonumber\\
        & \qquad \qquad + \left|f(X_j;~ \widetilde{\theta}_{\rmj,\rmj})
         - f(X_j;~ \widetilde{\theta}_{n,\rmj})\right| 
         > \nu
    \Big) \geq (\alpha'-\alpha)(n+1)\Bigg\}
    \Bigg] \label{eq: event count}\\
    & \leq \frac{1}
    {(\alpha'-\alpha)}
    \Bigg\{
        \bbP\left[
        \left|f(X_{n+1};~ \widetilde{\theta}_{\rmj,\rmj})
        -f(X_{n+1};~ \widetilde{\theta}_{n,\rmj})\right|
         > \nu/2
    \right]\nonumber\\
    & \qquad \qquad + 
    \bbP\left[
        \left|f(X_j;~ \widetilde{\theta}_{\rmj,\rmj})
         - f(X_j;~ \widetilde{\theta}_{n,\rmj})\right| > \nu/2
    \right] 
    \Bigg\}\label{eq: sum split}\\
    & \leq \frac{1}
    {(\alpha'-\alpha)}
    \left\{
        2\bbP\left[
        \left|f(X_{n+1};~ \widetilde{\theta}_{\rmj,\rmj})
        -f(X_{n+1};~ \widetilde{\theta}_{n,\rmj})\right|
         > \nu/2
    \right]
    +
    2\epsilon+\delta
    \right\} \label{eq: by dp} \\
    & \leq  \frac{2\eta + 2\epsilon + \delta}{\alpha'-\alpha} \label{eq: by stability}.
\end{align}
}

The probabilities are taken with respect to all the training data $\cD$ as well as the test data $(X_{n+1}, Y_{n+1})$. \eqref{eq: def} holds by the definitions of the prediction intervals of interest, which is equivalent to \eqref{eq: event count}; by data exchangeability and Markov inequality, \eqref{eq: sum split} holds true; 
by the property of differential privacy in the Appendix, the in-sample stability term  in \eqref{eq: sum split} is relaxed to an out-of-sample stability condition in \eqref{eq: by dp},
which is bounded by the condition in \cref{eq: stable condition}.

By the jackknife+ coverage guarantee in \cite{foygel2019predictive} that 
\begin{equation}
    \bbP(Y_{n+1} \notin  \widetilde{C}_{\alpha', n} (X_{n+1})) \leq 2\alpha',
\end{equation}
we can bound the miscoverage rate for the prediction interval 
$\widehat{C}_{\alpha, n}^{\nu} (X_{n+1})$:
\begin{align}
    \bbP(Y_{n+1} \notin & \widehat{C}_{\alpha, n}^{\nu} (X_{n+1}))\\
    \leq & \bbP(Y_{n+1} \notin  \widetilde{C}_{\alpha', n} (X_{n+1})) + \bbP\left[
        \widetilde{C}_{\alpha', n} (X_{n+1}) \nsubseteq 
        \widehat{C}_{\alpha, n}^{\nu} (X_{n+1})
    \right]\\
    \leq & 2\alpha' + \frac{2\eta + 2\epsilon + \delta}{\alpha' - \alpha}
\end{align}
for all $\alpha'>\alpha$. Take $\alpha' = \alpha + \sqrt{2\eta + 2\epsilon + \delta}$, we therefore have the coverage of $\widehat{C}_{\alpha, n}^{\nu} (X_{n+1})$: 
\begin{align*}
    \bbP(Y_{n+1} \in & \widehat{C}_{\alpha, n}^{\nu} (X_{n+1})) 
    = 1- \bbP(Y_{n+1} \notin  \widehat{C}_{\alpha, n}^{\nu} (X_{n+1}))\\
    \geq & 1-2\alpha - 3\sqrt{2\eta + 2\epsilon + \delta}.
\end{align*}

\section{Experiments}
We compare the following methods for estimating prediction intervals:
(1) jackknife+: defined in \cref{sec:jackknife+}, with the base algorithm being neural networks with random initialization;
(2) DP-Lazy PI (labeled \textit{lazy\_dp} in plots): our proposed method with the same NN architecture used in jackknife+, with the relaxation term $\nu$ set to zero, and differential parameters set as $\epsilon=0.01, ~\delta=10^{-3}$;
(3) Lazy PI without DP (labeled \textit{lazy\_finetune} in plots): removes the privacy mechanism compared to DP-Lazy PI, \ie use $\widehat{\theta}_{n}$ instead of $\widehat{\theta}_{n}^{\DP}$ as $\theta_0$ in the lazy estimation step \eqref{eq: lazy estimate}.

For evaluation, we consider the following three performance aspects on the test data set $\cD_{\test}$ with size $n_{\test}$ for a prediction interval $\widehat{C}_n(\cdot)$: (1) Coverage: $\frac{1}{n_{\test}}\sum_{(x,y)\in \cD_{\test}} \bbone\left[y\in \widehat{C}_n(x)\right]$(2) Compute time; (3) Average interval width: $\frac{1}{n_{\test}}\sum_{(x,y)\in \cD_{\test}}  \text{len}[\widehat{C}_n(x)]$.
As we set the level as $\alpha=0.1$, our target of the coverage is $1-2\alpha = 80\%$, we want to emphasize that a higher coverage of the prediction intervals is not always desirable, since it may suggest that the prediction intervals are overly wide and conservative.

\subsection{Simulation}\label{sec:simulation}
To generate the data, $\{X_i, ~i\in [N]\}$ are randomly selected from a Gaussian distribution 
$X_i\in \bbR^{p} \sim \cN(0, 5\cdot\mathbb{I}_{p}),~ i\in N$, where $N=5000$.  The responses $Y_i, ~i\in [N]$ are generated by:
\begin{equation}
    Y_i = \sqrt{\ReLU(X_i^\top \beta)} + \epsilon_i, \text{ where } \epsilon_i \sim \cN(0, 0.5), ~
    \beta \sim \text{Beta(1.0, 2.5)}.
\end{equation}

We consider a neural network with two hidden layers, each contains 64 nodes. 
The data are randomly split into the training data 
with the training size $n = 100$, and evaluation set with the size $n_{\test} = 4900$.

We construct prediction intervals using jackknife+, lazy finetune and DP-Lazy PI respectively on the training data with $\alpha=0.1$. When training the neural networks, the training batch size is $10$ and the max number of epochs is $10$. The penalty parameter is taken as $\lambda =10$ in the lazy type of methods.

As shown in \cref{fig: sim_normal}, the coverage of our method is closer to the target $80\%$ coverage and in terms of the compute time, lazy methods reduce the average compute time significantly compared to jackknife+. As the feature dimension increases, the coverage is still guaranteed; we can also see in \cref{fig: sim_normal} (b) that compared to lazy finetune without noise added in the full model training, lazy DP becomes better in the interval width, which might come from the interpolation (near zero training error) in the high-dimensional case that makes the LOO estimates in lazy finetune stays at the initial base model in the lazy procedure.

\begin{figure}
    \centering
    \begin{tabular}{{@{}c@{}}}
        \includegraphics[width=0.95\textwidth]{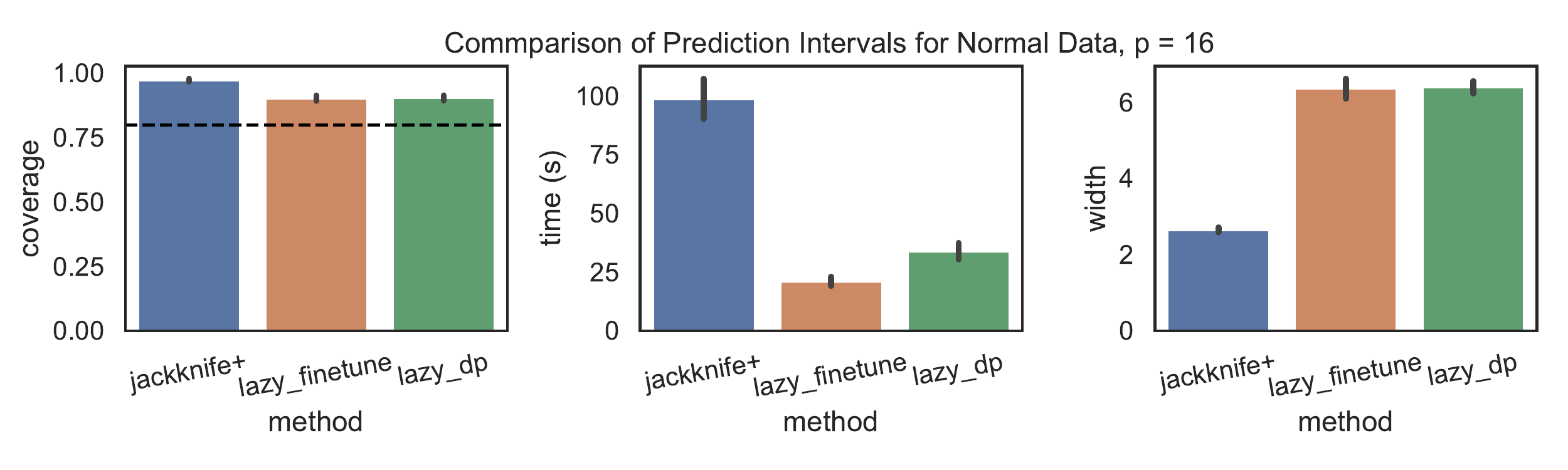}\\
        \small{(a) Simulated data with dimension $p=16$}
    \end{tabular}
    \begin{tabular}{{@{}c@{}}}
    \includegraphics[width = 0.95\textwidth]{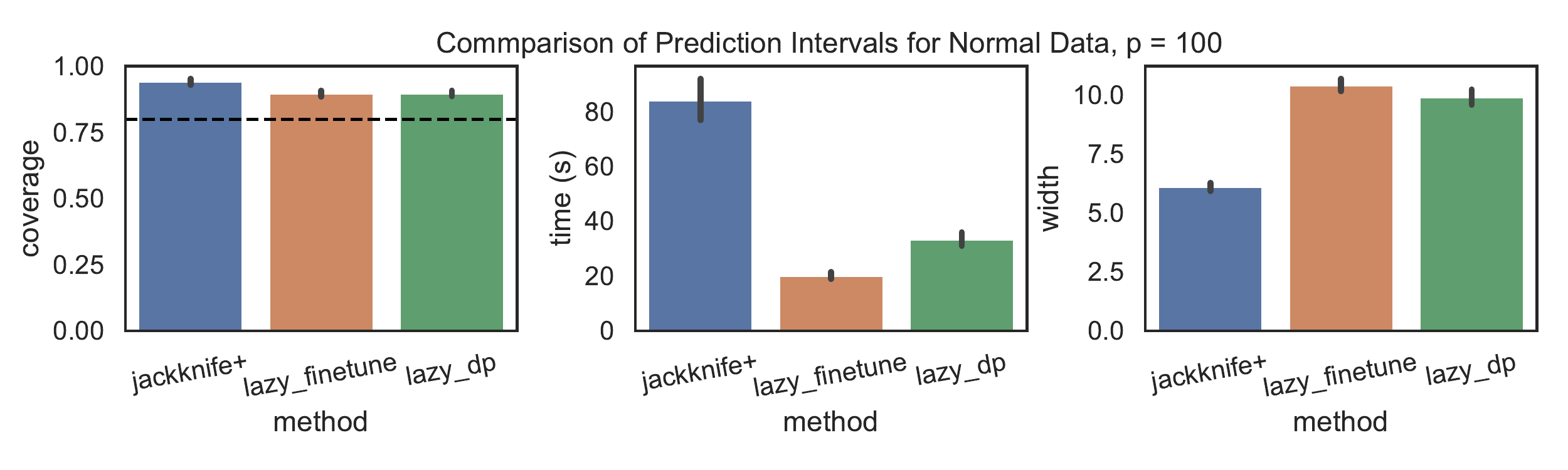}\\
    \small{(b) Simulated data with dimension $p=100$}
    \end{tabular}
\caption{The average coverage, average computing time and width of the prediction intervals with $\alpha=0.1$ in 15 trials, with the error bars to show the $\pm$ standard errors. The dotted line in the coverage panel indicates the target coverage level $80\%$.
}\label{fig: sim_normal}
\end{figure}

\subsection{Real Data}
\paragraph{Data sets}
(1) \emph{The BlogFeedbac}k\footnote{\url{https://archive.ics.uci.edu/ml/datasets/BlogFeedback}} data set \citep{spiliopoulou2014data,foygel2019predictive}, 
contains information on 52397 blog posts with $p=280$ covariates. 
The response is the number of comments left on the blog post in the following 24 hours, which we transformed as $Y = \log(1 + \#\text{comments})$. 
(2) \emph{The Medical Expenditure Panel Survey 2016 data set}\footnote{\url{https://meps.ahrq.gov/mepsweb/data_stats/download_data_files_detail.jsp?cboPufNumber=HC-192}} contains $33005$ records on individuals' utilization of medical services such as visits to the doctor, hospital stays etc with feature dimension $p=107$ with relevant features such as age, race/ethnicity, family income, occupation type, etc.  Our goal is the predict the health care system utilization of each individual, which is a composite score reflecting the number of visits to a doctor's office, hospital visits, days in nursing home care, etc. 

\paragraph{Experiment Settings}
The training size to construct the prediction interval is set $n = 100$. We consider the 3-layer neural network with hidden layers $(64, 64)$ as the base NN architecture.
To train the model parameters in jackknife+, we initialize the NN randomly for each LOO models. 
The penalty level is set as $\lambda = 10$, and the DP parameters are $\epsilon=0.01, ~\delta=10^{-3}$. The batch size is $10$ and the maximum number of epochs is 10.
We repeat the trials $15$ times with different random seeds for the train-test split and the random initialization of NN parameters in jackknife+. The figures show the average coverage, computing time and the interval width across these random trails.

\begin{figure}[ht]
    \centering
    \begin{tabular}{{@{}c@{}}}
        \includegraphics[width=0.95\textwidth]{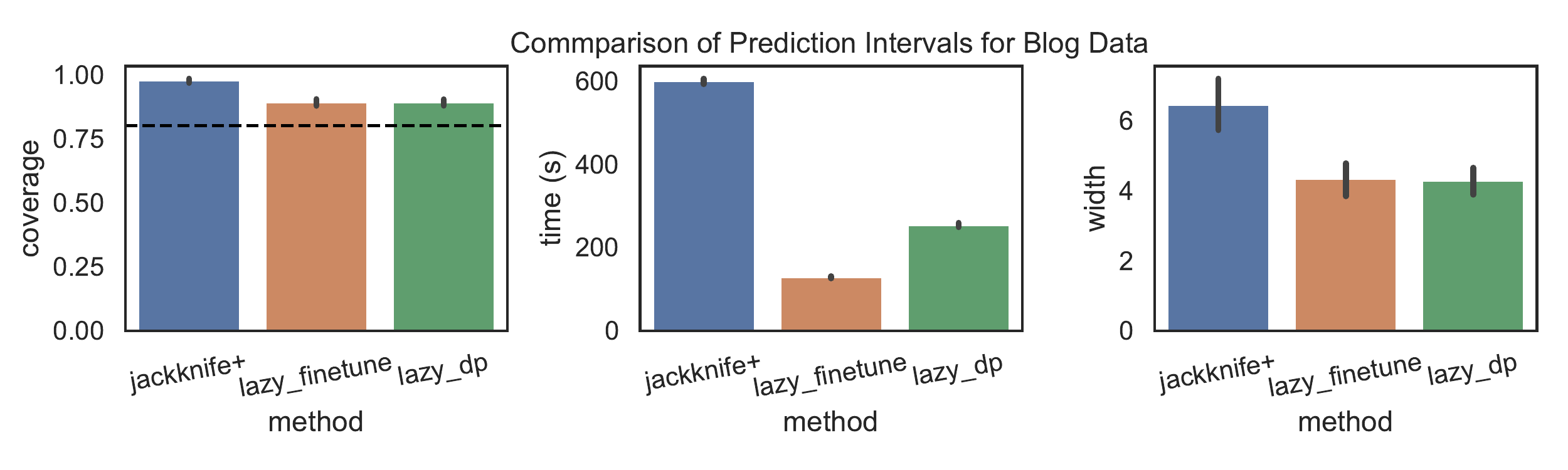}\\
        \small{(a) BlogFeedback data set}
    \end{tabular}
    \begin{tabular}{{@{}c@{}}}
    \includegraphics[width = 0.95\textwidth]{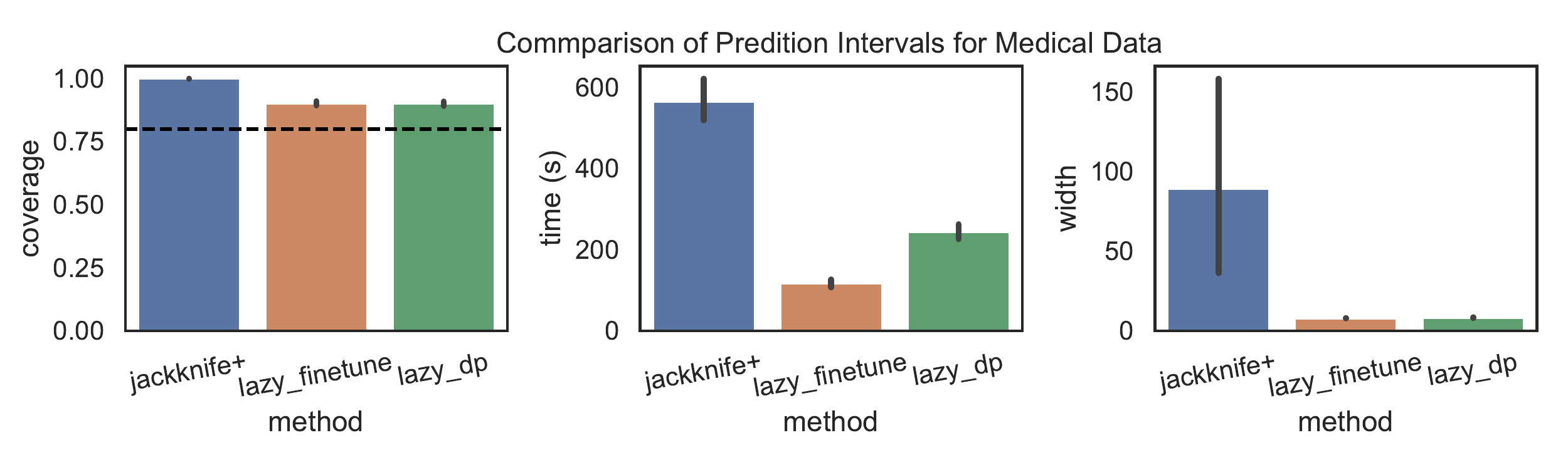}\\
    \small{(b) Medical Expenditure Panel Survey 2016 data set}
    \end{tabular}
    \caption{The coverage, average computing time and width of the prediction intervals on real data sets with $\alpha = 0.1$. The dotted line in the left panels in (a) and (b) is the target $80\%$ coverage. 
    }\label{fig: real data}
\end{figure}

\paragraph{Results}
As shown in \cref{fig: real data} (a) and (b), when we look into the real data sets, DP-Lazy PI decreases the computing time while enjoys a narrower interval width, while achieving the coverage guarantee at $80\%$. Compared to the data generated in the simulation, for real data with much complicated data structure, the model mis-specification makes the width of the jackknife+ estimator greater and the implicit regularization of the lazy approach reduces this width.
The implementation codes can be found in \url{https://github.com/VioyueG/DP_LAZY_PI}.
\section{Discussion and Limitations}

In this paper, we describe a new method, Differential Privacy Lazy Predictive Inference or DP-Lazy PI, which provides a fast method for distribution-free inference for neural networks. Our method involves using differentially private stochastic gradient descent (DP-SGD) as an initial estimate and then computing leave-one-out approximations using lazy training and then re-combining the leave-one-out estimators to form the predictive interval. Importantly, we are able to provide coverage guarantees that closely match the coverage guarantees provided for the jackknife+ procedure with a fraction of the computational cost since only a single neural network model needs to be trained.

The two real data examples also suggest that when we have significant model
misspecification, the implicit regularization that lazy training provides means that our DP Lazy-PI method has narrower width prediction intervals than jackknife+. 

An alternative method we evaluated in our experiments was lazy finetune, which removes the privacy mechanism prior to lazy training. The simulation results show very similar and at times slightly improved performance compared to our DP Lazy-PI approach; however, a limitation is that we lack theoretical guarantees for this approach. It remains an open question whether we can provide theoretical guarantees for lazy finetune, noting that in the zero training error regime where we can perfectly interpolate, lazy training does not apply since we are at a zero gradient initialization, making linearized models useless. However, in higher dimensions, the implicit regularization through early stopping of neural network training may be the reason lazy training works well. A further limitation of this work is that our method requires the base algorithm (\eg the neural network estimation) to be stable and satisfy the condition 
\eqref{eq: stable condition}.

\newpage

\section*{Acknowledgements}
    R. Willett gratefully acknowledges the support of AFOSR grant FA9550-18-1-0166 and NSF grants DMS-2023109 and DMS-1925101. G. Raskutti acknowledges the support of NIH grant R01 GM131381-03.

\bibliographystyle{plainnat}
\bibliography{references}
\appendix
\section{Appendix}

\subsection{Complete Proof of Theorem 1}

Recall the leave-one-out residuals we used in DP-Lazy PI
 \[ R_j := |Y_j - f(X_j;~ \widetilde{\theta}_{n,\rmj})|\]
 and define a set of jackknife+ type residuals
  \[ R_{j,\rmj} := |Y_j - f(X_j;~ \widetilde{\theta}_{\rmj,\rmj})|.\]
 
We construct a jackknife+ type of predictive interval as a reference interval for theoretical purposes:
\begin{equation}
    \widetilde{C}_{\alpha, n} (x) \coloneqq
\left[Q_{\alpha}^{-}\left\{f(x;~ \widetilde{\theta}_{\rmj,\rmj}) - 
R_{j,\rmj}\right\} ,
~Q_{\alpha}^{+}\left\{f(x;~ \widetilde{\theta}_{\rmj,\rmj}) + 
R_{j,\rmj}\right\} \right].
\end{equation}

\begin{lemma}\citep{foygel2019predictive}
\label{lemma: ref_coverage}
    For any test data $(X_{n+1}, Y_{n+1})$, the coverage of the $\widetilde{C}_{\alpha, n} (X_{n+1})$ is:
   \begin{equation}
    \bbP(Y_{n+1} \in \widetilde{C}_{\alpha, n} (X_{n+1})) \geq 1- 2\alpha, ~\forall \alpha\in (0, 1/2) 
   \end{equation}
\end{lemma}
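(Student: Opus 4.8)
The plan is to recognize $\widetilde{C}_{\alpha, n}$ as a genuine jackknife+ interval for an appropriately defined base regression algorithm, and then invoke the distribution-free jackknife+ coverage theorem of \citet{foygel2019predictive} directly. The key observation is that $\widetilde{\theta}_{\rmj,\rmj}$ from \eqref{eq: exchangeable approx} is a function of the reduced dataset $\cD_{\rmj} = \cD \setminus \{Z_j\}$ \emph{only}: both the private initialization $\widehat{\theta}^{\DP}_{\rmj}$ and the subsequent lazy linearization are computed on $\cD_{\rmj}$. Hence if we define the randomized regression map $\cA$ that sends a dataset $\cD'$ to the predictor $x \mapsto f(x; \lazy_\lambda(\cA^{\DP}(\cD'); \cD'))$, then $f(\cdot; \widetilde{\theta}_{\rmj,\rmj})$ is exactly the leave-one-out fit $\widehat{\mu}_{-j} := \cA(\cD_{\rmj})$, so $\widetilde{C}_{\alpha, n}$ coincides with the jackknife+ interval built from $\{\widehat{\mu}_{-j}(\cdot), R_{j,\rmj}\}_{j\in[n]}$. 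Note in particular that the differential privacy guarantee plays no role in this lemma; it is only needed later to relate $\widetilde{\theta}_{\rmj,\rmj}$ to the full-data quantity $\widetilde{\theta}_{n,\rmj}$.

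First I would verify the two hypotheses required by the jackknife+ theorem. The data $Z_1,\dots,Z_{n+1}$ are i.i.d., hence exchangeable, as assumed in \cref{section: jackknife+}. Second, the algorithm $\cA$ must treat its input points symmetrically: I would argue that DP-SGD, viewed as a function of the \emph{unordered} dataset together with an independent random seed (random minibatch selection and Gaussian noise injection), is permutation-invariant in distribution, and that the lazy ridge problem \eqref{eq: lazy estimate} is a symmetric function of its input points. Composing the two, $\cA(\cD')$ is invariant in distribution to relabeling of the points of $\cD'$, so the collection of leave-one-out residuals augmented by the test point remains exchangeable.

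With these two conditions in hand, the conclusion follows by quoting Theorem 1 of \citet{foygel2019predictive}. For completeness, the self-contained argument I would reproduce proceeds as follows: extend to the augmented sample and introduce leave-two-out predictors $\widehat{\mu}_{-(i,j)}$ trained on $\{Z_1,\dots,Z_{n+1}\}\setminus\{Z_i,Z_j\}$, observing that $\widehat{\mu}_{-j}(X_{n+1}) = \widehat{\mu}_{-(j,n+1)}(X_{n+1})$ and $R_{j,\rmj} = |Y_j - \widehat{\mu}_{-(j,n+1)}(X_j)|$; form the comparison matrix $A \in \{0,1\}^{(n+1)\times(n+1)}$ whose entries encode whether the pairwise interval endpoint built from $(i,j)$ falls on the wrong side of $Y_i$; apply the combinatorial lemma stating that at most a $2\alpha$ fraction of indices can be ``strange'' (have row-sum at least $(1-\alpha)(n+1)$); and finally use exchangeability to equate the probability that the test index $n+1$ is strange with the miscoverage probability, giving the bound $2\alpha$.

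The main obstacle I anticipate is the symmetry and exchangeability verification for the randomized DP-SGD map, rather than any analytic estimate: one must be careful that the random seed is drawn independently of the data and shared across the leave-one-out fits in a way that preserves joint exchangeability of the residual collection. Once symmetry is established, the coverage bound is an immediate consequence of the jackknife+ machinery and requires no new computation.
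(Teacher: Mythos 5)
Your proposal is correct and takes essentially the same route as the paper: the paper's proof of this lemma is simply to observe that $\widetilde{\theta}_{\rmj,\rmj}$ is a function of $\cD_{\rmj}$ alone, so $\widetilde{C}_{\alpha,n}$ is a bona fide jackknife+ interval, and then to invoke the coverage theorem of \citet{foygel2019predictive}. Your additional care about symmetry of the randomized DP-SGD-plus-lazy map and the sketch of the strange-points argument are faithful reproductions of the hypotheses and internals of that cited theorem, not a different approach.
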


\Cref{lemma: ref_coverage} gives the coverage of the jackknife+ interval $\widetilde{C}_{\alpha, n} (X_{n+1})$. 
This predictive interval is only constructed for theoretical purposes as a reference interval, while in practice, we avoid it due to its prohibitive computational cost. We show that the probability that our rapidly calculated predictive interval $\widehat{C}^{\nu}_{\alpha, n} (X_{n+1})$ does not contain a reference interval $\widetilde{C}_{\alpha', n} (X_{n+1})$ can be bounded by a small term that related to $\eta$ and the DP parameters $(\epsilon, \delta)$.
The proof of \Cref{lemma: ref_coverage} is based on the results in \cite{foygel2019predictive}.

\subsubsection*{Complete Proof of Theorem 1}
\begin{proof} 
In the following equations, we intend to bound the event that the right endpoint of the interval $\widetilde{C}_{\alpha', n} (x)$ is larger than the right endpoint of the interval $\widehat{C}^{\nu}_{\alpha, n} (x)$ for any $0<\alpha< \alpha'$:
\begin{align}
    &\left\{
    Q_{\alpha}^{+}\left\{f(x;~ \widetilde{\theta}_{n,\rmj}) + 
    R_j + \nu\right\}
    < Q_{\alpha'}^{+}\left\{f(x;~ \widetilde{\theta}_{\rmj,\rmj}) + 
    R_{j,\rmj}\right\}
    \right\}\label{appendix_eq: right_end}\\
    & \qquad\subseteq \left\{ \sum_{j=1}^n \bbone\left(
        f(x;~ \widetilde{\theta}_{n,\rmj}) + 
    R_j + \nu 
    < f(x;~ \widetilde{\theta}_{\rmj,\rmj}) + 
    R_{j,\rmj}
    \right) \geq (\alpha'-\alpha)(n+1)\right\}\label{appendix_eq: count}\\
    &\qquad =\left\{\sum_{j=1}^n \bbone\left(
        f(x;~ \widetilde{\theta}_{\rmj,\rmj})
        -f(x;~ \widetilde{\theta}_{n,\rmj}) 
        +R_{j,\rmj}
         - R_j 
         > \nu
    \right) \geq (\alpha'-\alpha)(n+1)\right\}\nonumber\\
    &\qquad \subseteq  \Bigg\{\sum_{j=1}^n \bbone\left(
        \left|f(x;~ \widetilde{\theta}_{\rmj,\rmj})-f(x;~ \widetilde{\theta}_{n,\rmj})\right|
        +\left|f(X_j;~ \widetilde{\theta}_{\rmj,\rmj})
         - f(X_j;~ \widetilde{\theta}_{n,\rmj})\right| 
         > \nu
    \right) \nonumber \\
    & \qquad \qquad \qquad \geq (\alpha'-\alpha)(n+1)\Bigg\}.\label{appendix_eq: right_end_bound}
\end{align}
The first inclusion \revise{from \eqref{appendix_eq: right_end} to} \eqref{appendix_eq: count} holds true because the $\lceil (1-\alpha')(n+1) \rceil$-th largest value among the set ${\{f(x;~ \widetilde{\theta}_{\rmj,\rmj}) + 
R_{j,\rmj}\}}_{j\in [n]}$ is larger than the 
$\lceil (1-\alpha)(n+1) \rceil$-th largest value among the set ${\{f(x;~ \widetilde{\theta}_{n,\rmj}) + 
R_j + \nu\}}_{j\in [n]}$ by the definition of quantiles in \eqref{appendix_eq: right_end}.
By the fact that $\alpha < \alpha'$, we know that $\lceil (1-\alpha')(n+1) \rceil \leq \lceil (1-\alpha)(n+1) \rceil$. 
Therefore, for any $\lceil (1-\alpha')(n+1) \rceil \leq j \leq \lceil (1-\alpha)(n+1) \rceil $, we always have 
\begin{equation}\label{appendix_eq: explain_j}
    f(x;~ \widetilde{\theta}_{n,\rmj}) + 
    R_j + \nu 
    < f(x;~ \widetilde{\theta}_{\rmj,\rmj}) + 
    R_{j,\rmj},
\end{equation}
\ie  there exist at least $\lceil(\alpha' - \alpha)(n+1)\rceil$ number of $j$-s, such that \eqref{appendix_eq: explain_j} holds true.

The symmetric event of \eqref{appendix_eq: right_end} is that the left endpoint of the interval $\widetilde{C}_{\alpha', n} (x)$ is smaller than the left endpoint of the interval $\widehat{C}^{\nu}_{\alpha, n} (x)$ for any $0<\alpha< \alpha'$. We can bound this event with the same technique:
\begin{align}
    &\left\{
    Q_{\alpha}^{-}\left\{f(x;~ \widetilde{\theta}_{n,\rmj}) 
    - R_j - \nu\right\}
    > Q_{\alpha'}^{-}\left\{f(x;~ \widetilde{\theta}_{\rmj,\rmj}) 
    -R_{j,\rmj}\right\}
    \right\}\label{appendix_eq: left_end}\\
    & \qquad \subseteq \left\{ \sum_{j=1}^n \bbone\left(
        f(x;~ \widetilde{\theta}_{n,\rmj})  
        -R_j - \nu 
    > f(x;~ \widetilde{\theta}_{\rmj,\rmj}) - 
    R_{j,\rmj}
    \right) \geq (\alpha'-\alpha)(n+1)\right\}\nonumber\nonumber\\
    & \qquad =\left\{\sum_{j=1}^n \bbone\left(
        f(x;~ \widetilde{\theta}_{n,\rmj}) 
        -f(x;~ \widetilde{\theta}_{\rmj,\rmj})
        +R_{j,\rmj}
         - R_j 
         > \nu
    \right) \geq (\alpha'-\alpha)(n+1)\right\}\nonumber\nonumber\\
    & \qquad \subseteq \Bigg\{\sum_{j=1}^n \bbone\left(
        \left|f(x;~ \widetilde{\theta}_{\rmj,\rmj})
        -f(x;~ \widetilde{\theta}_{n,\rmj})\right|
        +\left|f(X_j;~ \widetilde{\theta}_{\rmj,\rmj})
         - f(X_j;~ \widetilde{\theta}_{n,\rmj})\right| 
         > \nu
    \right) \nonumber \\
    & \qquad \qquad \qquad \geq (\alpha'-\alpha)(n+1)\Bigg\}.
    \label{appendix_eq: left_end_bound}
\end{align}

The event that $\widetilde{C}_{\alpha', n} (x) \nsubseteq 
\widehat{C}_{\alpha, n}^{\nu} (x)$ happens when either \eqref{appendix_eq: right_end} or \eqref{appendix_eq: left_end} happens, while from the above inclusions, these two events are both included in the event in \eqref{appendix_eq: right_end_bound}.

Hence for any $\alpha' > \alpha$, 
\begin{align}
    \bbP&\left[
        \widetilde{C}_{\alpha', n} (X_{n+1}) \nsubseteq 
        \widehat{C}_{\alpha, n}^{\nu} (X_{n+1})
    \right]\\
    &= \bbP\Bigg[
    \left\{Q_{\alpha}^{+}\left\{f(X_{n+1};~ \widetilde{\theta}_{n,\rmj}) + 
    R_j + \nu\right\}
    < Q_{\alpha'}^{+}\left\{f(X_{n+1};~ \widetilde{\theta}_{\rmj,\rmj}) + 
    R_{j,\rmj}\right\}\right\}
    \\
    &\qquad \qquad 
        \cup\left\{Q_{\alpha}^{-}\left\{f(X_{n+1};~ \widetilde{\theta}_{n,\rmj}) - 
        R_j - \nu\right\}
        > Q_{\alpha'}^{-}\left\{f(X_{n+1};~ \widetilde{\theta}_{\rmj,\rmj}) - 
        R_{j,\rmj}\right\}\right\}
        \Bigg]\nonumber\\
    & \leq  \bbP \Bigg[
        \Bigg\{ \sum_{j=1}^n \bbone\Big(
        \left|f(X_{n+1};~ \widetilde{\theta}_{\rmj,\rmj})
        -f(X_{n+1};~ \widetilde{\theta}_{n,\rmj})\right|\nonumber\\
        & \qquad \qquad + \left|f(X_j;~ \widetilde{\theta}_{\rmj,\rmj})
         - f(X_j;~ \widetilde{\theta}_{n,\rmj})\right| 
         > \nu
    \Big) \geq (\alpha'-\alpha)(n+1)\Bigg\}
    \Bigg] \label{appendix_eq: union_event}\\
    & \leq \frac{1}{(\alpha'-\alpha)(n+1)}\bbE\Bigg[
        \sum_{j=1}^n \bbone\Big(
        \left|f(X_{n+1};~ \widetilde{\theta}_{\rmj,\rmj})
        -f(X_{n+1};~ \widetilde{\theta}_{n,\rmj})\right| 
        \nonumber  \label{appendix_eq: Markov}
        \\
        & \qquad \qquad 
        +\left|f(X_j;~ \widetilde{\theta}_{\rmj,\rmj})
         - f(X_j;~ \widetilde{\theta}_{n,\rmj})\right| 
         > \nu
    \Big)\Bigg] \\
    & = \frac{n\bbP\left[
        \left|f(X_{n+1};~ \widetilde{\theta}_{\rmj,\rmj})
        -f(X_{n+1};~ \widetilde{\theta}_{n,\rmj})\right|
        +\left|f(X_j;~ \widetilde{\theta}_{\rmj,\rmj})
         - f(X_j;~ \widetilde{\theta}_{n,\rmj})\right| 
         > \nu
    \right]}
    {(\alpha'-\alpha)(n+1)}  \label{appendix_eq: sum exch}\\
    & \leq \frac{1}
    {(\alpha'-\alpha)}
    \Bigg\{
        \bbP\left[
        \left|f(X_{n+1};~ \widetilde{\theta}_{\rmj,\rmj})
        -f(X_{n+1};~ \widetilde{\theta}_{n,\rmj})\right|
         > \nu/2
    \right]\nonumber\\
    & \qquad \qquad + 
    \bbP\left[
        \left|f(X_j;~ \widetilde{\theta}_{\rmj,\rmj})
         - f(X_j;~ \widetilde{\theta}_{n,\rmj})\right| > \nu/2
    \right] 
    \Bigg\}\label{appendix_eq: sum split}
    \\
    & \leq \frac{1}
    {(\alpha'-\alpha)}
    \left\{
        2\bbP\left[
        \left|f(X_{n+1};~ \widetilde{\theta}_{\rmj,\rmj})
        -f(X_{n+1};~ \widetilde{\theta}_{n,\rmj})\right|
         > \nu/2
    \right]
    +
    2\epsilon+\delta
    \right\} \label{appendix_eq: by dp} \\
    & \leq  \frac{2\eta + 2\epsilon + \delta}{\alpha'-\alpha}.\label{appendix_eq: assump}
\end{align}
The above set of equations/inequalities aim to bound the event that a $\nu$-relaxed lazy-DP interval
$\widehat{C}_{\alpha, n}^{\nu} (X_{n+1})$
doesn't contain a jackknife+ interval
$\widetilde{C}_{\alpha', n} (X_{n+1})$ based on the data exchangeability and the $(\epsilon, \delta)$ differential privacy in the algorithm. 
\revise{The probabilities are taken with respect to all the training data $\cD$ as well as the test data $(X_{n+1}, Y_{n+1})$.}
Specifically, \eqref{appendix_eq: union_event} comes from the inclusions in \eqref{appendix_eq: right_end_bound},\eqref{appendix_eq: left_end_bound}; \eqref{appendix_eq: Markov} comes from the Markov equality that $\bbP(|X|\geq t) \leq \bbE(|X|)/t$ for any random variable $X$ and non-negative constant $t$; as we calculate the sum of expectations, the exchangeability of the training data makes \eqref{appendix_eq: sum exch} hold true; by the fact that 
\begin{align*}
    &\left\{\left|f(X_{n+1};~ \widetilde{\theta}_{\rmj,\rmj})
        -f(X_{n+1};~ \widetilde{\theta}_{n,\rmj})\right|
        +\left|f(X_j;~ \widetilde{\theta}_{\rmj,\rmj})
         - f(X_j;~ \widetilde{\theta}_{n,\rmj})\right| 
         > \nu\right\}\\
    & \qquad \subseteq  
    \left\{
        \left|f(X_{n+1};~ \widetilde{\theta}_{\rmj,\rmj})
        -f(X_{n+1};~ \widetilde{\theta}_{n,\rmj})\right|
         > \frac{\nu}{2}
    \right\}  \cup
    \left\{
        \left|f(X_j;~ \widetilde{\theta}_{\rmj,\rmj})
         - f(X_j;~ \widetilde{\theta}_{n,\rmj})\right| > \frac{\nu}{2}
    \right\},
\end{align*}
we could split \eqref{appendix_eq: sum exch} into the form \eqref{appendix_eq: sum split}.
{we have a $(n+1)$ factor in the denominator of \eqref{appendix_eq: sum exch} thus the $n$ factor can be dropped by $n/(n+1) \leq 1$.
}
\eqref{appendix_eq: by dp} is derived by the differential privacy property in \cref{lemma: dp_conditional_ind}; the final bound in \eqref{appendix_eq: assump} holds true due to the out-of-sample stability.

If we take $\alpha' = \alpha + \sqrt{2\eta + 2\epsilon + \delta}$, by the fact that 
\begin{equation}
    \bbP(Y_{n+1} \notin  \widetilde{C}_{\alpha', n} (X_{n+1})) \leq 2\alpha',
\end{equation}
we can bound the miscoverage rate for the predictive interval 
$\widehat{C}_{\alpha, n}^{\nu} (X_{n+1})$:
\begin{align}
    \bbP(Y_{n+1} \notin & \widehat{C}_{\alpha, n}^{\nu} (X_{n+1}))\\
    \leq & \bbP(Y_{n+1} \notin  \widetilde{C}_{\alpha', n} (X_{n+1})) + \bbP\left[
        \widetilde{C}_{\alpha', n} (X_{n+1}) \nsubseteq 
        \widehat{C}_{\alpha, n}^{\nu} (X_{n+1})
    \right]\\
    \leq & 2\alpha' + \sqrt{2\eta + 2\epsilon + \delta}\\
    = & 2\alpha + 3\sqrt{2\eta + 2\epsilon + \delta}.
\end{align}
Hence the coverage of $\widehat{C}_{\alpha, n}^{\nu} (X_{n+1})$ is 
\begin{align*}
    \bbP(Y_{n+1} \in & \widehat{C}_{\alpha, n}^{\nu} (X_{n+1})) \\
    =& 1- \bbP(Y_{n+1} \notin  \widehat{C}_{\alpha, n}^{\nu} (X_{n+1}))\\
    \geq & 1-2\alpha - 3\sqrt{2\eta + 2\epsilon + \delta}.
\end{align*}
\end{proof}

\subsection{Proofs of Supporting Lemmas}
For any two random variables $U, V$, and $(U, V)$ has the joint distribution $p(u,v)$;
then the \emph{marginal distribution} of $U$ is $p_{U}(u) = \int_{v} p(u, v) dv$, and similarly $p_V(v) = \int_{u} p(u, v) du$; 
$(U\otimes V)$ has the 
the product distribution of $U$ and $V$, \ie 
\[
    p_{(U\otimes V)} (u, v) = p_{U}(u)\times p_V(v).
    \]

Therefore, the distribution of $(U\otimes V)$ is equivalent to the joint distribution of $(U, V')$, where $V'$ is an \emph{independent copy} of $V$, \ie $V'$ has the same marginal distribution as $V$, but $V'$ is independent with $U$ and $V$.

To describe the ``similarity'' of two random vectors, we introduce the following notion of indistinguishability that is also used in \cite{Kasiviswanathan_2014,rogers2016maxinformation}.
\begin{definition} (Indistinguishability \citep{Kasiviswanathan_2014})
    Two random vectors $V_1, V_2$ in a space $\Omega$ are $(\epsilon, \delta)$-indistinguishable, denoted 
    $ V_1\approx_{\epsilon, \delta} V_2$,
    if for all $S\in \Omega$, we have 
    \begin{align*}
        &\bbP[V_1\in S] \leq e^{\epsilon} \cdot \bbP[V_2\in S] + \delta  ~~\text{and}\\
        &\bbP[V_2\in S] \leq e^{\epsilon} \cdot \bbP[V_1\in S] + \delta. 
    \end{align*}
\end{definition}
    
The following lemma is utilized in the course of proof for Lemma 3.2 in \citet{rogers2016maxinformation}. The proof is provided as well for the completeness of the paper.
    
\begin{lemma}
\label{lemma: max info}
% \becca{this is never referenced, where is it used? }
    Let $\cA^{\DP}: {(\cX\times \cY)}^n \mapsto \Theta$ be a $(\epsilon, \delta)$-DP algorithm,and $\widehat\theta^{\DP}_n = \cA^{\DP}(\cD)$ is the pretrained estimator with the full training data $\cD = (Z_1,\dots ,Z_n), ~ Z_i = (X_i, Y_i)$. Denote $\cD_{\rmj} = \cD \backslash \{Z_j\} \subseteq {(\cX\times \cY)}^{n-1}$ as the leave-one-out data set with the $j$-th data deleted.
    Then for all $j\in [n]$,  we have
    \begin{equation}
        (\widehat\theta^{\DP}_n, Z_j) \given[\big] \cD_{-j} \approx_{\epsilon, \delta} (\widehat\theta^{\DP}_n  \otimes Z_j) \given[\big] \cD_{-j},
        \label{appendix_eq:lem2}
    \end{equation}
    \ie  for {an independent copy} $Z_j'$ of $Z_j$, we have 
    \begin{equation}
        \begin{split}
            &\bbP((\widehat\theta^{\DP}_n, Z_j)\in S \given[\big]\cD_{\rmj}) \leq e^{\epsilon} \bbP((\widehat\theta^{\DP}_n, Z_j')\in S \given[\big]\cD_{\rmj}) +\delta \\
            \text{ and } \qquad & \bbP((\widehat\theta^{\DP}_n, Z_j)\in S \given[\big]\cD_{\rmj}) \geq e^{-\epsilon} \bbP((\widehat\theta^{\DP}_n, Z_j')\in S \given[\big]\cD_{\rmj}) -\delta,
        \end{split}
        \label{appendix_eq:lem2b}
    \end{equation}
    for any $S\subseteq \Theta\times(\cX\times \cY)$.
\end{lemma}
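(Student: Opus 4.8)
The plan is to reduce the claim to the defining $(\epsilon,\delta)$-DP inequality applied coordinate-wise in the $j$-th data slot, and then to average over the law of $Z_j$ so as to convert the conditional law of $\widehat\theta_n^{\DP}$ into its marginal. Throughout I condition on a fixed value of $\cD_{\rmj}$, so every probability below is understood conditionally on $\cD_{\rmj}$; since the internal randomness of $\cA^{\DP}$ is independent of the data, once $\cD_{\rmj}$ and $Z_j=z$ are fixed the estimator $\widehat\theta_n^{\DP}=\cA^{\DP}(\cD_{\rmj}\cup\{z\})$ has a well-defined conditional law.

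First I would exploit that, for any two candidate values $z,z'$ of the deleted sample, the datasets $\cD_{\rmj}\cup\{z\}$ and $\cD_{\rmj}\cup\{z'\}$ differ in exactly one entry and are therefore adjacent. The DP property (\cref{def:dp}) then gives, for every measurable $T\subseteq\Theta$,
\[
\bbP(\widehat\theta_n^{\DP}\in T \mid Z_j=z,\,\cD_{\rmj}) \le e^{\epsilon}\,\bbP(\widehat\theta_n^{\DP}\in T \mid Z_j=z',\,\cD_{\rmj}) + \delta,
\]
together with the same bound with the roles of $z,z'$ swapped. Integrating the right-hand side against the conditional marginal law $\pi$ of $Z_j$ given $\cD_{\rmj}$ (and using $\int d\pi = 1$) turns the conditional-on-$z'$ law into the marginal law of $\widehat\theta_n^{\DP}$ given $\cD_{\rmj}$, yielding the fiberwise bound
\[
\bbP(\widehat\theta_n^{\DP}\in T \mid Z_j=z,\,\cD_{\rmj}) \le e^{\epsilon}\,\bbP(\widehat\theta_n^{\DP}\in T \mid \cD_{\rmj}) + \delta
\]
for every $z$ and every $T$, and symmetrically with the inequality reversed.

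Next I would lift this from parameter sets $T$ to an arbitrary event $S\subseteq\Theta\times(\cX\times\cY)$ by slicing. Writing $S_z=\{\theta:(\theta,z)\in S\}$ for the $z$-section, I have $\bbP((\widehat\theta_n^{\DP},Z_j)\in S \mid \cD_{\rmj}) = \int \bbP(\widehat\theta_n^{\DP}\in S_z \mid Z_j=z,\,\cD_{\rmj})\,d\pi(z)$. Applying the fiberwise bound with $T=S_z$ inside the integral and integrating produces $e^{\epsilon}\int \bbP(\widehat\theta_n^{\DP}\in S_z \mid \cD_{\rmj})\,d\pi(z) + \delta$; the surviving integral is exactly the probability of $S$ under the law in which $\widehat\theta_n^{\DP}$ is drawn from its marginal independently of $Z_j$, i.e.\ under $(\widehat\theta_n^{\DP}\otimes Z_j)\mid\cD_{\rmj}$, equivalently $(\widehat\theta_n^{\DP},Z_j')$ for an independent copy $Z_j'$. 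This is the first inequality of \eqref{appendix_eq:lem2b}. The second (lower) inequality follows identically from the reversed fiberwise bound, and together they are precisely the indistinguishability statement \eqref{appendix_eq:lem2}.

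The only real obstacle is bookkeeping rather than any hard estimate: I must ensure the measure-theoretic slicing and the interchange of integration with the DP inequality are legitimate (Tonelli on nonnegative integrands), and that \emph{independent copy} is interpreted conditionally on $\cD_{\rmj}$, so that $Z_j'$ carries the conditional marginal $\pi$ while being independent of $\widehat\theta_n^{\DP}$. Once those conventions are fixed, every step is an application of DP on adjacent datasets followed by averaging, with no quantitative loss beyond the $(\epsilon,\delta)$ already present in the hypothesis.
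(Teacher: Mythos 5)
Your proposal is correct and follows essentially the same route as the paper's proof: both slice the event $S$ into sections $S_z$, apply the $(\epsilon,\delta)$-DP inequality to the adjacent datasets $\cD_{\rmj}\cup\{z\}$ and $\cD_{\rmj}\cup\{z'\}$, and then average over the law of $Z_j$ (the paper does this by inserting $\sum_{z_j'}\bbP[Z_j=z_j']=1$ and swapping sums, which is your Tonelli step in discrete form) to identify the resulting expression with the product law $(\widehat\theta_n^{\DP}\otimes Z_j)\mid\cD_{\rmj}$. The only difference is cosmetic ordering --- you establish the fiberwise bound against the marginal before slicing, while the paper slices first --- plus your slightly more careful measure-theoretic bookkeeping.
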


\begin{proof}     
    
    Fix any set $S \subseteq \Theta \times (\cX\times \cY)$. We then define $S_{z_j} = \{\theta\in \Theta: (\theta, z_j) \in S \}$ for any realization of $Z_j=z_j$.
    We now have
    \begin{equation}
        \label{appendix_eq: dp max info}
        \begin{split}
            &\bbP[(\cA^{\DP}(\cD), Z_j)\in S \given[\big]\cD_{\rmj}] \\
             & \qquad = \sum_{z_j\in (\cX\times \cY)} \bbP[Z_j = z_j] \bbP[\cA^{\DP}(\cD) \in S_{z_j} \given[\big]\cD_{\rmj}, Z_j = z_j] \\ 
            & \qquad \leq \sum_{z_j \in (\cX\times \cY)} \bbP[Z_j = z_j] \{
                e^{\epsilon} \bbP[\cA^{\DP}(\cD)\in S_{z_j} \given[\big]\cD_{\rmj}, Z_j=z_j'] + \delta
            \}, 
        \end{split}
    \end{equation}
    for any realization $z_j'\in (\cX\times \cY)$ for $Z_j$. Here the last line in \eqref{appendix_eq: dp max info} comes from the fact that $\cA^{\DP}(\cdot)$ is $(\epsilon, \delta)$ differentially private.
    By multiplying $\sum_{z_j'\in (\cX\times \cY)} \bbP[Z_j = z_j'] = 1$ to the RHS and LHS in \eqref{appendix_eq: dp max info}, we have 
    {\allowdisplaybreaks
        \begin{align}
        &\bbP[(\cA^{\DP}(\cD), Z_j) \in S \given[\big]\cD_{\rmj} ] \\
            & \qquad= \sum_{z_j'\in (\cX\times \cY)} \bbP[Z_j = z_j'] \bbP[(\cA^{\DP}(\cD), Z_j) \in S \given[\big]\cD_{\rmj} ] \\
            & \qquad\leq  
            \sum_{z_j'\in (\cX\times \cY)} \bbP[Z_j = z_j'] \sum_{z_j\in (\cX\times \cY)} \bbP[Z_j = z_j]
            \left\{
                e^{\epsilon} \bbP[\cA^{\DP}(\cD)\in S_{z_j} \given[\big]\cD_{\rmj} , Z_j=z_j'] + \delta
            \right\}\label{Appendix_eq:by_dp2}\\
            & \qquad=
            \sum_{z_j\in (\cX\times \cY)} \bbP[Z_j = z_j] \sum_{z_j'\in (\cX\times \cY)} \bbP[Z_j = z_j']
            \left\{
                e^{\epsilon} \bbP[\cA^{\DP}(\cD)\in S_{z_j} \given[\big]\cD_{\rmj} , Z_j=z_j'] + \delta
            \right\}\\
            & \qquad=  \sum_{z_j\in (\cX\times \cY)} \bbP[Z_j = z_j] 
            \left\{ e^{\epsilon} \bbP[\cA^{\DP}(\cD)\in S_{z_j} \given[\big]\cD_{\rmj} ] +\delta
                \right\}\\
            & \qquad= e^{\epsilon} \sum_{z_j\in (\cX\times \cY)} \bbP[Z_j =z_j]\bbP[\cA^{\DP}(\cD)\in S_{z_j} \given[\big]\cD_{\rmj} ] + \delta\\
            & \qquad= e^{\epsilon} \bbP[\cA^{\DP}(\cD) \otimes Z_j \in S \given[\big]\cD_{\rmj}  ] + \delta.
    \end{align}}
    With a similar argument, we can prove its symmetric counterpart:
    \begin{equation}
        \bbP[\cA^{\DP}(\cD) \otimes Z_j \in S \given[\big]\cD_{\rmj} ] \leq e^{\epsilon} \bbP[(\cA^{\DP}(\cD), Z_j) \in S \given[\big]\cD_{\rmj} ] + \delta.
    \end{equation}
    Since $\widehat\theta_n^{\DP}= \cA^{\DP}(\cD)$, we finish the proof that $\widehat\theta_n^{\DP}\otimes Z_j$ is ($\epsilon, \delta$)-indistinguishable with $(\widehat\theta_n^{\DP}, Z_j)$ given the other data points $\cD_{\rmj} $, \ie  
    \begin{equation*}
        (\widehat\theta_n^{\DP}, Z_j)\given[\big]\cD_{\rmj}  \approx_{\epsilon, \delta} (\widehat\theta_n^{\DP}\otimes Z_j)\given[\big]\cD_{\rmj}.
    \end{equation*}
\end{proof}

\begin{lemma}\label{col: dependency}
    The dependency of the leave-one-out estimator $\widetilde{\theta}_{n, \rmj}$ on the $j$-th data point $Z_j = (X_j, Y_j)$ can be controlled by
    \begin{equation}
        (\widetilde\theta_{n, \rmj}, Z_j)\given[\big] \cD_{\rmj} \approx_{\epsilon, \delta}  (\widetilde\theta_{n, \rmj} \otimes Z_j)\given[\big] \cD_{\rmj},
    \end{equation}
    \ie for an independent copy of $Z_j'$ for $Z_j$,
    \begin{equation}
        \begin{split}
            &\bbP((\widetilde\theta_{n, \rmj}, Z_j)\in S \given[\big]\cD_{\rmj}) \leq e^{\epsilon} \bbP((\widetilde\theta_{n, \rmj}, Z_j')\in S \given[\big]\cD_{\rmj}) +\delta \qquad \text{and}\\
            & \bbP((\widetilde\theta_{n, \rmj}, Z_j)\in S \given[\big]\cD_{\rmj}) \geq e^{-\epsilon} \bbP((\widetilde\theta_{n, \rmj}, Z_j')\in S \given[\big]\cD_{\rmj}) -\delta.
        \end{split}
    \end{equation}
\end{lemma}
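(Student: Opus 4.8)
The plan is to obtain \cref{col: dependency} from \cref{lemma: max info} by a single post-processing (data-processing) step, exploiting the fact that the lazy operator introduces no new dependence on $Z_j$. The key observation is that, because the ridge objective in \eqref{eq: lazy estimate} is strictly convex for $\lambda>0$, the lazy map $\theta_0 \mapsto \lazy_\lambda(\theta_0; \cD_{\rmj})$ has a unique minimizer and is therefore a \emph{deterministic, measurable} function of its initialization and its data argument. Since $\widetilde\theta_{n,\rmj} = \lazy_\lambda(\widehat\theta_n^{\DP}; \cD_{\rmj})$ takes as data only $\cD_{\rmj}$ (the point $Z_j$ enters \emph{solely} through the initializer $\widehat\theta_n^{\DP} = \cA^{\DP}(\cD)$), after conditioning on $\cD_{\rmj}$ we may write $\widetilde\theta_{n,\rmj} = g_{\cD_{\rmj}}(\widehat\theta_n^{\DP})$ for the fixed map $g_{\cD_{\rmj}}(\cdot) := \lazy_\lambda(\cdot; \cD_{\rmj})$.

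First I would record the post-processing property of indistinguishability: if $V_1 \approx_{\epsilon,\delta} V_2$ on a space $\Omega$ and $\phi:\Omega\to\Omega'$ is any measurable map, then $\phi(V_1) \approx_{\epsilon,\delta} \phi(V_2)$. This is immediate from the definition, since for any $S\subseteq\Omega'$ one has $\bbP[\phi(V_1)\in S] = \bbP[V_1\in \phi^{-1}(S)] \le e^{\epsilon}\bbP[V_2\in\phi^{-1}(S)] + \delta = e^{\epsilon}\bbP[\phi(V_2)\in S] + \delta$, and symmetrically. The same argument applies verbatim to the \emph{conditional} laws given $\cD_{\rmj}$, since $\cD_{\rmj}$ is held fixed throughout.

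Next I would apply this with the coordinatewise map $\Phi_{\cD_{\rmj}}(\theta, z) := (g_{\cD_{\rmj}}(\theta),\, z)$ on $\Theta\times(\cX\times\cY)$ to the conclusion of \cref{lemma: max info}, namely $(\widehat\theta_n^{\DP}, Z_j)\given[\big]\cD_{\rmj} \approx_{\epsilon,\delta} (\widehat\theta_n^{\DP}\otimes Z_j)\given[\big]\cD_{\rmj}$. On the left, $\Phi_{\cD_{\rmj}}$ sends $(\widehat\theta_n^{\DP}, Z_j)$ to $(g_{\cD_{\rmj}}(\widehat\theta_n^{\DP}), Z_j) = (\widetilde\theta_{n,\rmj}, Z_j)$. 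On the right, writing the product law as that of $(\widehat\theta_n^{\DP}, Z_j')$ for an independent copy $Z_j'$ of $Z_j$, the map yields $(\widetilde\theta_{n,\rmj}, Z_j')$; because $Z_j'$ is independent of $\widehat\theta_n^{\DP}$ given $\cD_{\rmj}$ and $\widetilde\theta_{n,\rmj}$ is a function of $\widehat\theta_n^{\DP}$, the variable $Z_j'$ remains independent of $\widetilde\theta_{n,\rmj}$, so this is exactly $(\widetilde\theta_{n,\rmj}\otimes Z_j)\given[\big]\cD_{\rmj}$. Post-processing then gives $(\widetilde\theta_{n,\rmj}, Z_j)\given[\big]\cD_{\rmj} \approx_{\epsilon,\delta} (\widetilde\theta_{n,\rmj}\otimes Z_j)\given[\big]\cD_{\rmj}$, which is the claim; spelling out the definition of $\approx_{\epsilon,\delta}$ recovers the two displayed inequalities.

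The step I expect to require the most care is the identification of the image of the product distribution on the right-hand side: one must check that applying a function to the first coordinate does not disturb the independence structure, i.e. that $g_{\cD_{\rmj}}(\widehat\theta_n^{\DP})$ is still independent of the fresh copy $Z_j'$, and keep the conditioning on $\cD_{\rmj}$ consistent throughout. Everything else is a direct transcription of the preimage argument for post-processing and of the determinism of the ridge-regularized lazy map.
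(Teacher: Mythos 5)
Your proposal is correct and follows essentially the same route as the paper: the paper's proof is precisely a post-processing argument, writing $\widetilde\theta_{n,\rmj} = g(\widehat\theta_n^{\DP}\given[\big]\cD_{\rmj})$ for a deterministic map $g$ (given explicitly in closed form for the ridge-regularized linearized least-squares problem, rather than via your strict-convexity appeal) and transferring the conclusion of \cref{lemma: max info} through the preimage set $\cS_{g,\cD_{\rmj}}$ of the coordinatewise map $(\theta,z)\mapsto(g(\theta\given[\big]\cD_{\rmj}),z)$. Your extra care about the product structure being preserved on the right-hand side is exactly the identification the paper makes in its final display, so the two arguments coincide in substance.
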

\begin{proof}
    Recall the definition of $\widetilde\theta_{n, \rmj}$:
    \begin{equation*}
        \begin{split}
            \widetilde{\theta}_{n, \rmj}
            =
    \widehat{\theta}^{\DP}_n  +\arg\min_{\Delta \theta} \left\{\sum_{i \in [n]\backslash \{j\}} \mathcal{L}
    \left(Y_i, f(X_i, \widehat{\theta}^{\DP}_n)+ \Delta \theta^\top \nabla_{\theta} f(X_i; \theta)|_{\theta = \widehat{\theta}^{\DP}_n}\right) + \lambda \|\Delta \theta\|^2
    \right\}.
        \end{split}
    \end{equation*}
    
    Denote $\widehat{G}(\bX_{\rmj}; \theta)\in \bbR^{(n-1)\times (n-1)} := \nabla_{\theta} f(\bX_{\rmj}; ~\theta) \cdot {\nabla_{\theta} f(\bX_{\rmj}; ~\theta)}^\top$ as the gram matrix (\aka~the NTK matrix) on data $\cD_{\rmj} = (\bX_{\rmj}, \bY_{\rmj})$. Define $g(\cdot| \cD_{\rmj}): \Theta \mapsto \Theta$:
    \begin{equation*}
        \begin{split}
           g(\theta \given[\big] \cD_{\rmj})
            \coloneqq  \theta  + {\nabla_{\theta} f(\bX_{\rmj}; ~\theta)}^\top {\left(
                \widehat{G}(\bX_{\rmj}; \theta) + \lambda I_{n-1}
            \right)}^{-1} \cdot 
            \left[
                \bY_{\rmj} - f(\bX_{\rmj}; ~ \theta)
            \right];
        \end{split}
    \end{equation*}
    then the explicit form of $\widetilde{\theta}_{n, \rmj}$ can be represented as:
    \begin{equation}
        \widetilde{\theta}_{n, \rmj}\given[\big]  \bz_{\rmj}= g(\widehat \theta_n^{\DP} | \cD_{\rmj} = \bz_{\rmj}) 
            = g(\cM(\cD)| \cD_{\rmj}).
    \end{equation}
    For any $\cS\subseteq \Theta\times (\cX\times \cY)$, $\cS_{g, \cD_{\rmj}} \coloneqq \{(\theta, z): \forall z\in (\cX\times \cY), (g(\theta|\cD_{\rmj}),z)\in \cS \}$. Therefore 
    \begin{subequations}
    \begin{align}
             &\bbP\left[(\widetilde{\theta}_{n, \rmj}, Z_j)\in \cS \given[\big] \cD_{\rmj} \right]\\
             & \qquad=\sum_{z_j \in (\cX \times \cY)} \bbP(Z_j = z_j)\bbP\left[(g(\widehat{\theta}_{n}^{\DP}\given[\big] \cD_{\rmj}), Z_j)\in \cS| \cD_{\rmj} , Z_j = z_j\right]\\
             & \qquad=\bbP\left[
                 (\widehat{\theta}_{n}^{\DP}, Z_j)\in \cS_{g, \cD_{\rmj}} \given[\big] \cD_{\rmj}
             \right]\\
             & \qquad\leq e^{\epsilon} \bbP\left[
                \widehat{\theta}_{n}^{\DP}\otimes Z_j\in \cS_{g, \cD_{\rmj}} \given[\big] \cD_{\rmj}
            \right] + \delta \label{Appendix_eq:DP_max_info}\\
            & \qquad=e^{\epsilon} \bbP\left[
                \widetilde{\theta}_{n, \rmj}\otimes Z_j \in \cS \given[\big] \cD_{\rmj} 
            \right] +\delta.
    \end{align}
    \end{subequations}
    where \eqref{Appendix_eq:DP_max_info} is derived from \cref{lemma: max info}.
    Similarly, we can show that 
    \[\bbP\left[(\widetilde{\theta}_{n, \rmj}, Z_j)\in \cS \given[\big] \cD_{\rmj} \right] \geq e^{-\epsilon}  \bbP\left[
        \widetilde{\theta}_{n, \rmj}\otimes Z_j \in \cS \given[\big] \cD_{\rmj} 
    \right] -\delta,\]
     finishing the proof of \cref{col: dependency}.
\end{proof}

\begin{lemma}(Closeness of in-sample and out-of-sample stability)\label{lemma: dp_conditional_ind}
    For any $x\in \bbR^{+}$, we have
    \begin{align}
        & \Bigg|\bbP\left[
            \left|f(X_{n+1};~ \widetilde{\theta}_{\rmj,\rmj})
        -f(X_{n+1};~ \widetilde{\theta}_{n,\rmj})\right|
         > x
    \right] \\
    & \qquad \qquad
    -
    \bbP\left[
        \left|f(X_j;~ \widetilde{\theta}_{\rmj,\rmj})
         - f(X_j;~ \widetilde{\theta}_{n,\rmj})\right| > x
    \right]\Bigg| \leq 2\epsilon + \delta.
    \end{align}
\end{lemma}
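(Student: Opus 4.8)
The plan is to work conditionally on the reduced dataset $\cD_{\rmj}$ and to recognize that both probabilities in the statement are driven by the \emph{same} event set, with the held-out point $Z_j$ entering one term through the estimator $\widetilde\theta_{n,\rmj}$ and the other term only through an independent test feature. Three structural facts make this possible: (i) $\widetilde\theta_{\rmj,\rmj}$ is a deterministic function of $\cD_{\rmj}$ alone, so conditioning on $\cD_{\rmj}$ freezes it; (ii) $\widetilde\theta_{n,\rmj}$ depends on $Z_j$ only through the DP initialization $\widehat\theta_n^{\DP}$, so its dependence on $Z_j$ is controlled by \cref{col: dependency}; and (iii) $X_{n+1}$ is independent of the training data and shares the marginal law of $X_j$.

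First I would fix a realization of $\cD_{\rmj}$ (so that $\widetilde\theta_{\rmj,\rmj}$ is a fixed quantity) and, for the threshold $x>0$ of the statement, define the data-dependent set
\[
    S \;=\; \bigl\{(\theta,(u,v))\in \Theta\times(\cX\times\cY) : \bigl|f(u;~\widetilde\theta_{\rmj,\rmj}) - f(u;~\theta)\bigr| > x \bigr\},
\]
where $u\in\cX$ is the covariate component; note that membership in $S$ depends on $(u,v)$ only through $u$. With this notation the in-sample probability is exactly $\bbP[(\widetilde\theta_{n,\rmj}, Z_j)\in S \mid \cD_{\rmj}]$. For the out-of-sample term I would argue that, conditional on $\cD_{\rmj}$, the test feature $X_{n+1}$ is independent of $\widetilde\theta_{n,\rmj}$ and has the same marginal law as $X_j$ (which, by exchangeability, is itself independent of $\cD_{\rmj}$). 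Hence the pair $(\widetilde\theta_{n,\rmj}, X_{n+1})$ has the same conditional law as the feature marginal of the product distribution $(\widetilde\theta_{n,\rmj}\otimes Z_j)\mid\cD_{\rmj}$; since $S$ ignores the response coordinate, the out-of-sample probability equals $\bbP[(\widetilde\theta_{n,\rmj}\otimes Z_j)\in S \mid \cD_{\rmj}]$.

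It then remains to compare these two conditional probabilities. Writing $P=\bbP[(\widetilde\theta_{n,\rmj}, Z_j)\in S \mid \cD_{\rmj}]$ and $Q=\bbP[(\widetilde\theta_{n,\rmj}\otimes Z_j)\in S \mid \cD_{\rmj}]$, \cref{col: dependency} gives $P\le e^{\epsilon} Q + \delta$ and $Q\le e^{\epsilon} P + \delta$. Since $P,Q\le 1$, this yields $|P-Q|\le (e^{\epsilon}-1)+\delta \le 2\epsilon + \delta$, the final step using the elementary inequality $e^{\epsilon}-1\le 2\epsilon$ valid for $\epsilon\in[0,1]$, which covers the small privacy budget used here. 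Because the bound $2\epsilon+\delta$ is uniform over the realization of $\cD_{\rmj}$, taking expectation over $\cD_{\rmj}$ transfers it to the unconditional probabilities and gives the claim.

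The load-bearing step — and the one I expect to require the most care — is the identification in the second paragraph: recognizing that replacing the held-out feature $X_j$ by the fresh test feature $X_{n+1}$ is exactly the operation of passing from the joint law $(\widetilde\theta_{n,\rmj}, Z_j)$ to the product law $(\widetilde\theta_{n,\rmj}\otimes Z_j)$ conditional on $\cD_{\rmj}$. This is precisely what allows the differential-privacy indistinguishability of \cref{col: dependency} to be invoked, thereby converting the \emph{in-sample} stability (how the prediction at the deleted point $X_j$ moves) into the \emph{out-of-sample} stability (the analogous movement at an independent test point) at a cost governed solely by the privacy parameters $(\epsilon,\delta)$.
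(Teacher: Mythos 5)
Your proof is correct, and it runs on the same engine as the paper's: condition on $\cD_{\rmj}$ to freeze $\widetilde{\theta}_{\rmj,\rmj}$, identify the out-of-sample probability with the product law $(\widetilde{\theta}_{n,\rmj}\otimes Z_j)\given[\big]\cD_{\rmj}$ using the independence and equidistribution of $X_{n+1}$ and $X_j$, invoke the indistinguishability in \cref{col: dependency}, and finish with $e^{\epsilon}-1+\delta\leq 2\epsilon+\delta$ plus integration over $\cD_{\rmj}$. The execution of the middle step is genuinely different, though, and in a useful way. The paper never introduces your single event set $S$; instead it proves an indistinguishability statement for the triples $(\widetilde{\theta}_{\rmj,\rmj},\widetilde{\theta}_{n,\rmj},X_j)$ versus $(\widetilde{\theta}_{\rmj,\rmj},\widetilde{\theta}_{n,\rmj},X_{n+1})$ given $\cD_{\rmj}$ — exploiting the conditional independence $\widetilde{\theta}_{\rmj,\rmj}\ind(\widetilde{\theta}_{n,\rmj},X_j)\given[\big]\cD_{\rmj}$ — but establishes it only for product sets $S_1\times S_2\times S_3$, and then bounds the difference of the two stability probabilities by a supremum over such product sets. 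Strictly speaking, the event $\{|f(X;\widetilde{\theta}_{\rmj,\rmj})-f(X;\widetilde{\theta}_{n,\rmj})|>x\}$ is not a product set in that triple space, so the paper's final reduction is loose as written. Your route sidesteps this: since \cref{col: dependency} holds for arbitrary measurable $S\subseteq\Theta\times(\cX\times\cY)$, applying it directly to your conditionally fixed, non-product set $S$ is both shorter and more airtight. The one point requiring care — which you handle correctly — is that $S$ depends on $\cD_{\rmj}$ through $\widetilde{\theta}_{\rmj,\rmj}$; this is legitimate exactly because the lemma's inequalities hold conditionally on $\cD_{\rmj}$ for each fixed set, and the resulting bound $2\epsilon+\delta$ is uniform over realizations of $\cD_{\rmj}$, so it survives taking expectations. (A minor bookkeeping note: the displayed inequalities in \cref{col: dependency} give $Q\leq e^{\epsilon}P+e^{\epsilon}\delta$ rather than $Q\leq e^{\epsilon}P+\delta$, but the lemma's headline claim of $(\epsilon,\delta)$-indistinguishability is symmetric by definition, so the pair of inequalities you use is available.)
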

\begin{remark}
    As we can see, $\left|f(X_{n+1};~ \widetilde{\theta}_{\rmj,\rmj})
-f(X_{n+1};~ \widetilde{\theta}_{n,\rmj})\right|$ illustrates the change of \emph{out-of-sample} prediction when a training data $Z_j$ is removed in the model training, while $\left|f(X_{j};~ \widetilde{\theta}_{\rmj,\rmj})
-f(X_{j};~ \widetilde{\theta}_{n,\rmj})\right|$ depicts the change of \emph{in-sample} prediction when a training data $Z_j$ is removed. Normally the in-sample and out-of-sample stability are different and have to be considered separately as in \citet{foygel2019predictive}, but due to the differential privacy in the training algorithm, we can show through the following lemma that these two types of stabilities can be bridged by differential privacy, therefore we only need out-of-sample stability in the algorithm.
\end{remark}

\begin{proof}
    First, we know by the definition of $\widetilde{\theta}_{\rmj,\rmj}$ and $\widetilde{\theta}_{n,\rmj}$ and the fact that $\cD_{\rmj} \ind \{(X_j, Y_j)\}$, we know that $\widetilde{\theta}_{\rmj,\rmj} \ind (\widetilde{\theta}_{n,\rmj}, X_j) \given[\Big] \cD_{\rmj}$; therefore, for any $S_1, S_2\subseteq \Theta$ and $S_3 \subseteq \cX$, we have
    \begin{align}
        &\bbP\left[
            (\widetilde{\theta}_{\rmj,\rmj}, \widetilde{\theta}_{n,\rmj}, X_j) \in S_1 \times S_2 \times S_3 \given[\Big] \cD_{\rmj}
        \right] \\
        & \qquad=\bbP\left[\widetilde{\theta}_{\rmj,\rmj} \in S_1\given[\Big] \cD_{\rmj}\right] \cdot
        \bbP\left[
            (\widetilde{\theta}_{n,\rmj}, X_j) \in S_2 \times S_3 \given[\Big] \cD_{\rmj}
        \right]\label{appendix_eq: lemma8_l1}\\
        & \qquad\leq \bbP\left[\widetilde{\theta}_{\rmj,\rmj} \in S_1\given[\Big] \cD_{\rmj}\right] \left\{
            e^{\epsilon} \bbP\left[
                (\widetilde{\theta}_{n,\rmj}, X_{n+1}) \in S_2 \times S_3 \given[\Big] \cD_{\rmj}
            \right] + \delta
        \right\}\label{appendix_eq: lemma8_l2}\\
        & \qquad\leq e^{\epsilon}\bbP\left[\widetilde{\theta}_{\rmj,\rmj} \in S_1\given[\Big] \cD_{\rmj}\right]\bbP\left[
            (\widetilde{\theta}_{n,\rmj}, X_{n+1}) \in S_2 \times S_3 \given[\Big] \cD_{\rmj} 
        \right]+\delta\\
        & \qquad=e^{\epsilon}
        \bbP\left[
            (\widetilde{\theta}_{\rmj,\rmj}, \widetilde{\theta}_{n,\rmj}, X_{n+1}) \in S_1 \times S_2 \times S_3 \given[\Big] \cD_{\rmj}
        \right]
        +\delta\label{appendix_eq: lemma8_l4}.
    \end{align}
    Here \eqref{appendix_eq: lemma8_l1} and \eqref{appendix_eq: lemma8_l4} holds true due to the conditional independence; \eqref{appendix_eq: lemma8_l2} is from \cref{col: dependency}.
    By the same argument, we have 
    \begin{align}
        &\bbP\left[
            (\widetilde{\theta}_{\rmj,\rmj}, \widetilde{\theta}_{n,\rmj}, X_j) \in S_1 \times S_2 \times S_3 \given[\Big] \cD_{\rmj}
        \right] \\
        &\qquad \qquad\geq 
        e^{-\epsilon}
        \bbP\left[
            (\widetilde{\theta}_{\rmj,\rmj}, \widetilde{\theta}_{n,\rmj}, X_{n+1}) \in S_1 \times S_2 \times S_3 \given[\Big] \cD_{\rmj}
        \right]
        -\delta.
    \end{align}
    Hence for any $x\in \bbR^+$, 
    \begin{subequations}
    \begin{align}
        &\Bigg|\bbP\left[
            \left|f(X_{n+1};~ \widetilde{\theta}_{\rmj,\rmj})
        -f(X_{n+1};~ \widetilde{\theta}_{n,\rmj})\right|
         > x
        \right]\\
        & \qquad \qquad \qquad
        -
        \bbP\left[
        \left|f(X_j;~ \widetilde{\theta}_{\rmj,\rmj})
         - f(X_j;~ \widetilde{\theta}_{n,\rmj})\right| > x
    \right]\Bigg| \\
    & \qquad \leq  \bbE \sup_{\substack{S_1, S_2\subseteq \Theta\\ S_3\in \cX}}\Bigg|
    \bbP\left[
        ((\widetilde{\theta}_{\rmj,\rmj}, \widetilde{\theta}_{n,\rmj}, X_{n+1}) \in S_1 \times S_2 \times S_3) \given[\Big] \cD_{\rmj}
    \right] \\
    & \qquad \qquad \qquad -
    \bbP\left[
        ((\widetilde{\theta}_{\rmj,\rmj}, \widetilde{\theta}_{n,\rmj}, X_j) \in S_1 \times S_2 \times S_3) \given[\Big] \cD_{\rmj}
    \right]
    \Bigg|\nonumber\\
    & \qquad \leq \bbE \sup_{\substack{S_1, S_2\subseteq \Theta\\ S_3\in \cX}}\left|
    (e^{\epsilon}-1)\bbP\left[
        ((\widetilde{\theta}_{\rmj,\rmj}, \widetilde{\theta}_{n,\rmj}, X_{n+1}) \in S_1 \times S_2 \times S_3) \given[\Big] \cD_{\rmj}
    \right]+\delta
    \right|\\
    & \qquad \leq  e^{\epsilon} - 1 + \delta \leq 2\epsilon + \delta,
    \end{align}
    \end{subequations}
    for any $\epsilon<1/2$.
\end{proof}

\section{Appendix. Discussion of the Stability Condition}

\subsection{The Laplacian Mechanism}\label{sec:laplacian}

Given an algorithm $\cA$ and a norm function $\|\cdot\|$ over the range of $\cA$, the sensitivity of $\cA$ is defined as 
\begin{equation} \label{append_eq: sensitivity}
    s(\cA) = \max_{\substack{d(D, D')=1,\\ \card(D)=n}} \|\cA(D) - \cA(D')\|.
\end{equation}

(Usually the norm function $\|\cdot \|$ is either $\ell_1$ or $\ell_2$ norm.)

Based on the definition of sensitivity defined as \cref{append_eq: sensitivity}, the Laplacian mechanism to construct differential private algorithm is as follows \citep{koufogiannis2015optimality, holohan2018bounded}: for an algorithm (or function) $\cA: \cD\mapsto \Theta\subset \bbR^M$, the random function $\cA^{\DP}(d) = \cA(d) + \xi, ~\forall d \in \cD$ satisfies $(\epsilon,0)$-differential privacy, where $\xi\in \bbR^{M}$ follows a Laplacian distribution $\text{Lap}(0, s(\cA)/\epsilon)$:
\[
p(\xi) \propto e^{-\epsilon \|\xi\|/ s(\cA)}.
\]

Specifically, let $\cA$ be the algorithm to find the empirical risk minimization(ERM) on a certain data set $\cD$:
\[\cA(\cD) = \arg\min_{\theta} \sum_{(X_i, Y_i)\in \cD} [Y_i - f(X_i; \theta)]^2;\]
Let $s(\cA)$ be it sensitivity defined in \cref{append_eq: sensitivity}. 

Let $\widehat{\theta}_{n} = \cA(\cD)$ and $\widehat{\theta}_{\rmj} = \cA(\cD\backslash \{Z_j\})$ be the ERM estimations respectively on the full training data $\cD$ and leave-one-out data $\cD_{\rmj}$;

Consider the $(\epsilon, 0)$-DP algorithm $\cA^{\DP}(\cdot) = \cA(\cdot) + \xi, ~ \xi \sim \text{Lap}(0, s(\cA)/\epsilon)$, and we have the $(\epsilon, 0)$-DP estimators
\[\widehat{\theta}_{n}^{\DP} = \widehat{\theta}_{n} + \xi;~~
\widehat{\theta}_{\rmj}^{\DP} = \widehat{\theta}_{\rmj} + \xi'.
\]

\begin{assumption}\label{asst:diff}
    The parametrization of neural network $f(\cdot; \theta)$ is differentiable with a locally Lipschitz differential $\nabla_{\theta} f$.
\end{assumption}

\begin{assumption} (Local strong convexity of neural networks)\label{asst:lip}
    For all $i \in [n]$, $f(\bX_{-j};\theta)$ is locally Lipschitz continuous w.r.t $\theta$ with Lipschitz constant $\Lip(f)$, and $\nabla_{\theta} f(\bX_{-j};\theta)$ is Lipschitz continuous with a Lipschitz constant $\Lip(\nabla f)$, \ie~ for any  $\theta_1, \theta_2\in \cB_{r}(\widehat{\theta}_{-j})$ with $r = 2(s(\cA)+ \sqrt{s(\cA)})$:
    \begin{align*} 
        &\| f(\bX_{-j};\theta_1) - f(\bX_{-j};\theta_2)\|_n \leq \Lip(f) \|\theta_1 - \theta_2\|;\\ & \| \nabla_{\theta} f(\bX_{-j};\theta_1) - \nabla_{\theta} f(\bX_{-j};\theta_2)\|_n \leq \Lip(\nabla f) \|\theta_1 - \theta_2\|,
    \end{align*}
    where $\|\cdot \|_{n}$ is defined as the empirical norm.
\end{assumption}
According to \cref{lemma:local_convexity}, for one-hidden-layer neural networks with activation function being ReLU, sigmoid or tanh, \label{asst:lip} can be satisfied. We'll revisit this assumption in \cref{sec:local_convexity}.

\begin{theorem}
    Consider the Laplacian mechanism with $(\epsilon, 0)$-DP parameters. Suppose \cref{asst:diff,asst:lip} and $s(\cA) = O(1/n)$, with probability at least $1- e^{-c\epsilon \sqrt{n}}$ for some constant $c$, 
    we have 
    \begin{equation}
        |f(x;~ \widetilde{\theta}_{n,\rmj}) -
    f(x;~ \widetilde{\theta}_{\rmj,\rmj})| = O\left(\frac{\log^2(1/n)}{n} \right).
    \end{equation}
\end{theorem}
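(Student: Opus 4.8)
The plan is to reduce the claim to a perturbation bound on the lazy operator and then to quantify the effect of the two differentially private initializations. By \cref{asst:lip}, on the ball $\cB_r(\widehat\theta_{\rmj})$ the network is $\Lip(f)$-Lipschitz in $\theta$, so it suffices to control the parameter gap $\|\widetilde\theta_{n,\rmj}-\widetilde\theta_{\rmj,\rmj}\|$ and multiply by $\Lip(f)$. Both estimators are obtained by applying the \emph{same} lazy map $g(\cdot\mid\cD_{\rmj})$ (the closed form fitting $\cD_{\rmj}$ written out in the proof of \cref{col: dependency}) to two initializations, $\widehat\theta_n^{\DP}=\widehat\theta_n+\xi$ and $\widehat\theta_{\rmj}^{\DP}=\widehat\theta_{\rmj}+\xi'$. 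The first step is to show both initializations lie in $\cB_r(\widehat\theta_{\rmj})$ with the stated probability: the deterministic part satisfies $\|\widehat\theta_n-\widehat\theta_{\rmj}\|\le s(\cA)=O(1/n)$, and the Laplacian noise has scale $s(\cA)/\epsilon$, so a tail bound gives $\|\xi\|,\|\xi'\|\lesssim\sqrt{s(\cA)}$ except on an event of probability $e^{-c\epsilon/\sqrt{s(\cA)}}=e^{-c\epsilon\sqrt n}$. This is exactly why the radius in \cref{asst:lip} is taken to be $r=2(s(\cA)+\sqrt{s(\cA)})$, and it is the origin of the $1-e^{-c\epsilon\sqrt n}$ probability.

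The core of the argument is a second-order (Gauss--Newton-type) expansion of $g(\cdot\mid\cD_{\rmj})$ around the leave-one-out ERM $\widehat\theta_{\rmj}$. Writing $J=\nabla_\theta f(\bX_{\rmj};\widehat\theta_{\rmj})$ and $G=\widehat G(\bX_{\rmj};\widehat\theta_{\rmj})=JJ^\top$, the optimality of the ERM gives $J^\top(\bY_{\rmj}-f(\bX_{\rmj};\widehat\theta_{\rmj}))=0$, and the push-through identity $J^\top(G+\lambda I)^{-1}=(J^\top J+\lambda I)^{-1}J^\top$ then yields $g(\widehat\theta_{\rmj}\mid\cD_{\rmj})=\widehat\theta_{\rmj}$; that is, the ERM is a fixed point of the lazy operator. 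Differentiating $g$ there gives a Jacobian of the form $I-P$ with $P=J^\top(G+\lambda I)^{-1}J=(J^\top J+\lambda I)^{-1}J^\top J$. Under the local strong convexity of \cref{asst:lip}, $J^\top J\succeq\mu I$ with $\mu=\Theta(n)$, so $I-P=\lambda(J^\top J+\lambda I)^{-1}=O(\lambda/n)$ is negligible, i.e. the first-order term essentially vanishes. Taylor's theorem with the $\Lip(\nabla f)$ bound then controls the remainder by the square of the displacement:
\[
\|g(\theta_0\mid\cD_{\rmj})-\widehat\theta_{\rmj}\|\;=\;O\!\left(\tfrac{\lambda}{n}\,\|\theta_0-\widehat\theta_{\rmj}\|+\Lip(\nabla f)\,\|\theta_0-\widehat\theta_{\rmj}\|^2\right),
\]
uniformly over $\theta_0\in\cB_r(\widehat\theta_{\rmj})$. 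Applying this to both initializations with $\|\theta_0-\widehat\theta_{\rmj}\|=O(\sqrt{s(\cA)})=O(1/\sqrt n)$ converts the $O(1/\sqrt n)$ deviations of the initializations into $O(1/n)$ deviations of the lazy parameters; by the triangle inequality $\|\widetilde\theta_{n,\rmj}-\widetilde\theta_{\rmj,\rmj}\|=O(1/n)$, and one more use of $\Lip(f)$ gives $|f(x;\widetilde\theta_{n,\rmj})-f(x;\widetilde\theta_{\rmj,\rmj})|=O(1/n)$, up to the logarithmic factors carried by the Laplacian tail, which is what produces the $\log^2(1/n)/n$ rate.

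The main obstacle is making the ``$Dg\approx 0$'' step quantitative and uniform. This requires (i) bounding $\|I-P\|$ through the strong-convexity constant $\mu$ and verifying $\mu=\Theta(n)$ for the empirical loss (so that $\lambda/\mu=O(\lambda/n)$), which is precisely where the local strong convexity for one-hidden-layer ReLU/sigmoid/tanh networks assumed in \cref{asst:lip} enters; and (ii) bounding the second-order remainder of $g$ uniformly on $\cB_r(\widehat\theta_{\rmj})$, which entails controlling the conditioning of $G+\lambda I$ and the variation of the Gram (NTK) matrix along the segment joining $\theta_0$ and $\widehat\theta_{\rmj}$ via the locally Lipschitz differential of \cref{asst:diff} and the constant $\Lip(\nabla f)$. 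A secondary technical point is tracking the exact power of the logarithm: the high-probability Laplacian bound yields a displacement of order $\sqrt{s(\cA)}\,\log(1/n)$, whose square gives the stated $\log^2(1/n)/n$, and one must check that neither the $\lambda$-term nor the linearization (nonlinearity) error dominates this rate.
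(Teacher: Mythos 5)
Your Steps 1 and 3 do coincide with the paper's proof: the sensitivity bound plus the Laplacian tail put both $\widehat\theta_n^{\DP}$ and $\widehat\theta_{\rmj}^{\DP}$ within $s(\cA)+\sqrt{s(\cA)}$ of $\widehat\theta_{\rmj}$ with probability $1-2e^{-\epsilon/\sqrt{s(\cA)}}$, and the final transfer to function values via $\Lip(f)$ is identical. Where you genuinely diverge is Step 2, and there your argument has a gap. The paper never differentiates the closed-form ridge operator; it compares the gradient flow of the nonlinear loss with the gradient flow of the linearized loss (Theorem 2.3 and Lemma B.1 of Chizat et al.), both started at $\theta_0$, and chooses the time horizon $t\asymp\log\bigl(1/\|\theta_0-\widehat\theta_{\rmj}\|\bigr)$ --- it is this optimization over $t$, not the Laplacian tail, that produces the $\log^2$ factor. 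Your replacement (``$\widehat\theta_{\rmj}$ is a fixed point of $g(\cdot\mid\cD_{\rmj})$ and $Dg\approx 0$ there'') starts from a correct and nice observation, but the Jacobian is not $I-P$. Write $J=\nabla_\theta f(\bX_{\rmj};\widehat\theta_{\rmj})$ and $r^*=\bY_{\rmj}-f(\bX_{\rmj};\widehat\theta_{\rmj})$; stationarity of the ERM gives $J^\top r^*=0$, hence $JJ^\top r^*=0$ and $(JJ^\top+\lambda I)^{-1}r^*=\lambda^{-1}r^*$. Differentiating \emph{all three} $\theta_0$-dependencies of $g$ (the Jacobian, the Gram matrix, and the residual) and using these identities yields
\begin{equation*}
Dg\big|_{\widehat\theta_{\rmj}}\,v \;=\; (I-P)\,v \;+\; \frac{1}{\lambda}\,(I-P)\Bigl(\textstyle\sum_{i\neq j} r_i^{*}\,\nabla^2_\theta f(X_i;\widehat\theta_{\rmj})\,v\Bigr),
\qquad P=J^\top\bigl(JJ^\top+\lambda I\bigr)^{-1}J .
\end{equation*}
The second term, which your expansion drops, is of order $\Lip(\nabla f)\,\|r^*\|/\lambda$ (up to norm conventions); it vanishes only under interpolation ($r^*=0$), a regime the paper explicitly sets aside (its Discussion notes lazy training is vacuous at a zero-gradient initialization). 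With noisy data $\|r^*\|$ grows with $n$, so this term is not even $O(1)$, and the claimed quadratic contraction $\|g(\theta_0\mid\cD_{\rmj})-\widehat\theta_{\rmj}\|=O(\Delta^2)$ does not follow from what you wrote.

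A second, related problem is your appeal to $J^\top J\succeq\mu I$ with $\mu=\Theta(n)$: this requires the underparameterized regime $M\le n-1$, whereas the paper's own Step 2 invokes $\lambda_{\min}(JJ^\top)=\lambda_0>0$, i.e.\ $M\ge n-1$. In the overparameterized case relevant to neural networks, $I-P$ acts as the identity on the $(M-n+1)$-dimensional null space of $J$, and since the Laplacian perturbations $\xi,\xi'$ are isotropic, the displacement $\theta_0-\widehat\theta_{\rmj}$ has a $\Theta(\sqrt{s(\cA)})$ component there that $I-P$ does not contract at all; your first-order term then does not ``essentially vanish'' and the bound degrades to $O(1/\sqrt{n})$. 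To rescue the fixed-point route you would need both a near-interpolation (or large-$\lambda$) condition to kill the residual term and a restriction of the analysis to the row space of $J$ (or a function-space, NTK-level version of the expansion). The paper's gradient-flow route does not confront these issues head-on either --- it packages them into the strong-convexity and spectral assumptions imported from Chizat et al.\ --- but its stated argument is the one that actually delivers the $\log^2(n)\cdot\Delta^2$ rate via the $t\asymp\log(1/\Delta)$ trade-off, which your proposal has no counterpart for.
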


\begin{proof}

\textbf{Step 1. Show that $\widehat\theta_{n}^{\DP}$ and $\widehat\theta_{\rmj}^{\DP}$ are both in the neighborhood of $\widehat{\theta}_{\rmj}$ with high probability.}

Recall the fact that $\widehat{\theta}_n^{\DP} = \widehat{\theta}_n + \xi$, by triangle inequality and the fact that $\|\widehat{\theta}_n-\widehat{\theta}_{\rmj}\| = \|\cA(\cD) -\cA(\cD_{\rmj})\| \leq s(\cA)$, we have
\begin{align*}
    \|\widehat{\theta}_n^{\DP} -\widehat{\theta}_{\rmj}\| 
    \leq \|\widehat{\theta}_n-\widehat{\theta}_{\rmj}\| + 
    \|\widehat{\theta}_n - \widehat{\theta}_n^{\DP}\|\leq s(\cA) + \|\xi\|;
\end{align*}
Therefore, by the property of $\text{Lap}(0, \frac{s(\cA)}{\epsilon})$ that $P(\|\xi\|>t) \leq \exp[-\frac{\epsilon t}{s(\cA)}]$,  we have
\begin{align*}
    &P\left(\|\widehat{\theta}_n^{\DP} -\widehat{\theta}_{\rmj}\| > s(\cA) + \sqrt{s(\cA)}\right)\\
    \leq &P\left(
        s(\cA) + \|\xi\| >s(\cA) + \sqrt{s(\cA)}
    \right)\\
    \leq & \exp\left(- \frac{\epsilon}{\sqrt{s(\cA)}}
    \right);
\end{align*}

On the other hand, $\widehat{\theta}_{\rmj}^{\DP}$ is obtained by adding perturbations $\xi'$ around the $\widehat{\theta}_{\rmj}$,
therefore we have
\begin{align*}
    P\left(\|\widehat{\theta}_{\rmj}^{\DP} -\widehat{\theta}_{\rmj}\| > \sqrt{s(\cA)}\right)
    \leq P\left(
         \|\xi'\| >\sqrt{s(\cA)}
    \right)
    \leq  \exp\left(- \frac{\epsilon}{\sqrt{s(\cA)}}
    \right).
\end{align*}

Hence with probability at least $1-2e^{-\frac{\epsilon}{\sqrt{s(\cA)}}}$, we have $\|\widehat{\theta}_{\rmj}^{\DP} -\widehat{\theta}_{\rmj}\| \leq \sqrt{s(\cA)}\leq  \sqrt{s(\cA)} + s(\cA)$ and $\|\widehat{\theta}_{n}^{\DP} -\widehat{\theta}_{\rmj}\| \leq  \sqrt{s(\cA)} + s(\cA)$.

\textbf{Step 2. Show that $\widetilde{\theta}_{n,\rmj}$ and $\widetilde{\theta}_{\rmj,\rmj}$ are close.}

Recall that $\widetilde{\theta}_{n,\rmj} = g_{\rmj}(\widehat{\theta}_{n}^{\DP})$ and 
$\widetilde{\theta}_{\rmj,\rmj} = g_{\rmj}(\widehat{\theta}_{\rmj}^{\DP})$,
denote $\nabla f(\theta_0) = \nabla_{\theta} f(\bX_{-j}; \theta_0) \in \bbR^{(n-1)\times M}$,
and the linearized function of $f(x; \theta)$ around $\theta_0$ as
\[
    \bar{f}_{\theta_0}(x; \theta) = f(x; \theta_0) + (\theta - \theta_0)^\top \nabla_{\theta} f(x; \theta_0)\]
then $g_{\rmj}(\theta_0)$ is the ERM of the linearized function $\bar{f}_{\theta_0}(x; \theta)$ on the LOO data set:
\begin{equation}
    \begin{split}
        &g_{\rmj}(\theta_0) = \theta_0 +  {\nabla f(\theta_0)}^\top {\left(
            \nabla f(\theta_0) {\nabla f(\theta_0)}^\top + \lambda I_{n-1}
        \right)}^{-1} 
        \left[
            \bY_{-j} - f(\bX_{-j};\theta_0)
        \right]\\
        &= \arg\min_{\theta \in \bbR^M}\left\{ \sum_{i\neq j} 
        \left[Y_i- f(X_i; \theta_0) - (\theta - \theta_0)^\top \nabla_{\theta} f(X_i; \theta_0)\right]^2
         + \lambda \| \theta - \theta_0 \|^2\right\}\\
        &\coloneqq \arg\min_{\theta \in \bbR^M} {\cL_{-j}}(\bar{f}_{\theta_0}(x; \theta)) +  \lambda \| \theta - \theta_0 \|^2;
    \end{split}
\end{equation}

Define the gradient flow of ${\cL_{-j}}(\bar{f}_{\theta_0}(x; \theta))$  in the parameter space $\bbR^M$ as $\bar{\theta}(t; \theta_0)$, which satisfies $\bar{\theta}(0; \theta_0) = \theta_0$ and solves the ODE:
\begin{equation}
    \bar{\theta}'(t;\theta_0) = -\nabla {\cL_{-j}}\left(\bar{f}_{\theta_0}(x; \bar{\theta}(t;\theta_0))\right)=
     \left[\bY_{-j} - \bar{f}_{\theta_0}(\bX_{-j}; \bar{\theta}(t;\theta_0))\right]^\top \nabla f(\theta_0);
\end{equation}

On the other hand, as $f(x; \theta) \approx f(x; \theta_0) + (\theta - \theta_0)^\top \nabla_{\theta} f(x; \theta_0)= \bar{f}_{\theta_0}(x; \theta)$, we define the gradient flow on the neural network $f(\cdot, \theta)$  initialized at $\theta_0$ (\ie ~${\theta}(t;\theta_0) =\theta_0$) and solves the ODE:
\begin{equation}
    {\theta}'(t;\theta_0) = -\nabla {\cL_{-j}}\left({f}_{\theta_0}(x; {\theta}(t;\theta_0))\right)=
     \left[\bY_{-j} - {f}(\bX_{-j}; {\theta}(t;\theta_0))\right]^\top \nabla f(\theta(t; \theta_0));
\end{equation}

By Theorem 2.3 in \cite{chizat2020lazy}, we can show that 
\begin{equation}
    \| {\theta}(t;\theta_0) - \bar{\theta}(t;\theta_0)\| \leq t^2 \Lip(f)^4 \Lip(\nabla f)  \left(
        \frac{2}{\Lip(f)} + \frac{4t}{3}\Lip(f)
    \right) \|\theta_0 - \widehat{\theta}_{-j}\|^2;
\end{equation}

By the strong convexity and continuity of $\cL_{-j}(\bar{f}_{\theta_0}(x; \theta))$, we can show that $\bar{\theta}(t;\theta_0)$ converge to its minimizer $g_{\rmj}(\theta_0)$  (take $\lambda = 0$) fast:
\begin{equation}
    \|\bar{\theta}(t; \theta_0) - g_{\rmj}(\theta_0)\| \leq (\mu_0/m_0)^2 \Lip(f) \exp(-m_0\lambda_{0} t)  \| \theta_0 - \widehat{\theta}_{-j}\|,
\end{equation}
where $\cL_{-j}(\bar{f}_{\theta_0}(x; \theta))$ is $\mu_0$ strong convex and $m_0$ Lipschitz continuous, and $\lambda_0>0$ is the smallest eigenvalue of $\nabla f(\theta_0)\nabla f(\theta_0)^\top$.

By Lemma B.1 in \cite{chizat2020lazy}, if $\cL_{-j}(f(x;\theta)): \bbR^M\mapsto \bbR$ is $m$ strong convex with $\mu$-Lipschitz continuous gradient and the smallest eigenvalue of $\nabla f(\theta(t))\nabla f(\theta(t))^\top$ lower bounded by $\lambda_{\min} >0$ for $0\leq t\leq T$, then $\theta(t; \theta_0)$ can converge fast to the minimizer $\widehat{\theta}_{-j}$:
\begin{equation}
    \|\theta(t; \theta_0) - \widehat{\theta}_{-j}\| \leq (\mu/m)^2 \Lip(f) \exp(-m\lambda_{\min} t)  \| \theta_0 - \widehat{\theta}_{-j}\|.
\end{equation}

Hence,
\begin{align}
         &\|g_{\rmj}(\theta_0) - \widehat{\theta}_{-j}\| \\
    \leq &\|g_{\rmj}(\theta_0) -\bar{\theta}(t; \theta_0)\| + \|\theta(t; \theta_0) -\bar{\theta}(t; \theta_0)\| + \|\theta(t; \theta_0) - \widehat{\theta}_{-j}\|\\
    \leq & C\left[\log\frac{1}{\|\theta_0 -  \widehat{\theta}_{-j}\|}\right]^2 \cdot\|\theta_0 -  \widehat{\theta}_{-j}\|^2,
\end{align}
here we take $t = O\left(\log\frac{1}{\|\theta_0 -  \widehat{\theta}_{-j}\|}\right)$.

By the result from the first step that $\|\widehat{\theta}_{\rmj}^{\DP} -\widehat{\theta}_{\rmj}\| \leq \sqrt{s(\cA)}\leq  \sqrt{s(\cA)} + s(\cA)$ and $\|\widehat{\theta}_{n}^{\DP} -\widehat{\theta}_{\rmj}\| \leq  \sqrt{s(\cA)} + s(\cA)$ w.h.p, as well as the fact that 
 $\widetilde{\theta}_{n,\rmj} = g_{\rmj}(\widehat{\theta}_{n}^{\DP})$ and 
$\widetilde{\theta}_{\rmj,\rmj} = g_{\rmj}(\widehat{\theta}_{\rmj}^{\DP})$, we have
\begin{equation}
    \|\widetilde{\theta}_{n,\rmj} - \widetilde{\theta}_{\rmj,\rmj}\| \leq C\log^2\left(\frac{1}{s(\cA)+ \sqrt{s(\cA)}}\right)\left(s(\cA) + \sqrt{s(\cA)}\right)^2.
\end{equation}

\textbf{Step 3. Complete the statement}

With probability at least $1- 2e^{-\epsilon/\sqrt{s(\cA)}}$, we have
\begin{align*}
    &|f(x;~ \widetilde{\theta}_{n,\rmj}) -
    f(x;~ \widetilde{\theta}_{\rmj,\rmj})| 
    \leq \Lip(f)  \|\widetilde{\theta}_{n,\rmj} - \widetilde{\theta}_{\rmj,\rmj}\|\\
    \leq & C\Lip(f)\log^2\left(\frac{1}{s(\cA)+ \sqrt{s(\cA)}}\right)\left(s(\cA) + \sqrt{s(\cA)}\right)^2.
\end{align*}

Hence, suppose $s(\cA) = O(1/n)$, we have with probability at least $1- e^{-c \epsilon \sqrt{n}}$ for some $c>0$, 
\[
    |f(x;~ \widetilde{\theta}_{n,\rmj}) -
    f(x;~ \widetilde{\theta}_{\rmj,\rmj})| = O\left(\frac{\log^2(n)}{n} \right).
    \]

\end{proof}

\subsection{Local Strong Convexity of Neural Network}\label{sec:local_convexity}

Researchers have found that for the parameter recovery setting of neural networks, recovery guarantees can be provided if the neural network is initialized around the ground truth, due to the local strong convexity in the neighborhood of the ground truth parameters, see \cite{zhong2017recovery}. Also, \cite{li2017convergence} shows that in general, the convergence of NN parameters can be split into two phases. Once the parameters fall into a $\delta$-one point strong convex regime near the ground truth, the convergence changes to phase II, leading to fast parameter convergence to the ground truth.

\begin{lemma}\label{lemma:local_convexity}
(Positive Definiteness of Hessian near the ground truth in one-hidden-layer neural network \citep{zhong2017recovery})
If $(x_i, y_i), i\in [n]$ are sampled i.i.d.\ from the distribution: $x\sim \cN(0, I_d),~ y = \sum_{i=1}^k v_i^* \phi({w_i^*}^\top x)$ with $\phi$ being the activation functions as ReLU, sigmoid or tanh, and $k$ being the number of nodes in the hidden layer,  we can show that  as long as $\| W - W^* \| \leq \text{poly}(1/k, 1/\gamma, \frac{\nu_{\min}}{\nu_{\max}}) \| W^*\|$,  if $n> d \cdot \text{poly}(\log d, t, k, \nu, \gamma, \frac{\sigma_1^{2p}}{\rho})$, where $\nu_{\min} = \min_i |\nu^*|; \nu_{\max} = \max_i \nu^*$ we have with probability at least $1-d^{\Omega(t)}$,
    \begin{equation}
        \Omega(\nu_{\min} \rho(\sigma_k)/ (\kappa^2\gamma)) I \preceq \nabla^2 f(\bX, W) \preceq O(k\nu_{\max}^2 \sigma_1^{2p})I,
    \end{equation}
where $\sigma_i$ is the $i$-th singular value of $W^*$, $\kappa=\sigma_1/\sigma_k$, $\gamma= \prod_{i=1}^k/\sigma_k^k$, and $\rho(\sigma_k)>0$ (defined in \cite{zhong2017recovery}) is related to the activation function.
\end{lemma}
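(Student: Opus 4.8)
The plan is to prove the claim in two conceptual stages: first control the Hessian of the \emph{population} objective $f_D(W) = \tfrac{1}{2}\bbE_x\big[(\sum_{i} v_i^*\phi(w_i^\top x) - y)^2\big]$ on the neighborhood $\{W:\|W-W^*\|\le \text{poly}(1/k,1/\gamma,\nu_{\min}/\nu_{\max})\|W^*\|\}$, and then transfer these bounds to the empirical Hessian $\nabla^2 f(\bX,W)$ by matrix concentration. The anchor of the whole argument is that at the ground truth the residual $\sum_i v_i^*\phi(w_i^{*\top}x)-y$ vanishes identically, so the second-order term involving $\phi''$ drops out and the population Hessian collapses to the Gram matrix of the gradient features,
\begin{equation}
\nabla^2 f_D(W^*) = \bbE_x\big[\nabla_W g(x;W^*)\,\nabla_W g(x;W^*)^\top\big],
\end{equation}
where $g(x;W)=\sum_i v_i^*\phi(w_i^\top x)$. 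This object is automatically positive semidefinite; the work is to show it is strictly positive definite with the stated eigenvalue window.

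First I would compute the blocks of this Gram matrix in closed form, following \citet{zhong2017recovery}. The $(i,j)$ block is $v_i^*v_j^*\,\bbE_x[\phi'(w_i^{*\top}x)\phi'(w_j^{*\top}x)\,xx^\top]$, a Gaussian integral that I would evaluate by rotating coordinates into the span of $w_i^*,w_j^*$ and expanding $\phi'$ in the Hermite basis; the leading Hermite coefficients of $\phi'$ produce the activation-dependent constant $\rho(\sigma_k)>0$, which is strictly positive precisely because those coefficients are nonzero for ReLU, sigmoid and tanh. The upper bound $O(k\nu_{\max}^2\sigma_1^{2p})$ then follows from bounding the operator norm of each block by $\nu_{\max}^2$ times the $2p$-th moment factor $\sigma_1^{2p}$ coming from the polynomial growth of $\phi'$. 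The lower bound is the delicate part: the $k$-by-$k$ block structure must be shown to be nondegenerate, which I would do by writing $W^*=U\Sigma V^\top$ and reducing the smallest-eigenvalue question to a structured matrix whose conditioning is captured by $\kappa=\sigma_1/\sigma_k$ and $\gamma$; this is where the factor $\Omega(\nu_{\min}\rho(\sigma_k)/(\kappa^2\gamma))$ emerges.

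Next I would extend the eigenvalue window from $W^*$ to the entire ball. Since the map $W\mapsto\nabla^2 f_D(W)$ is Lipschitz (for sigmoid and tanh this is immediate from bounded derivatives, and for ReLU the map stays Lipschitz \emph{after taking the expectation} even though the per-sample Hessian is not smooth), a radius $\|W-W^*\|\le\text{poly}(1/k,1/\gamma,\nu_{\min}/\nu_{\max})\|W^*\|$ chosen small relative to the lower eigenvalue guarantees, by Weyl's inequality, that $\nabla^2 f_D(W)$ remains within half of the $W^*$ bounds throughout the ball. Handling the ReLU nonsmoothness requires care in this perturbation step, since the per-sample second derivative is a Dirac mass but the expectation smooths it out.

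Finally I would pass from the population Hessian to the empirical one. Fixing $W$, each summand $\nabla^2_W\ell(x_i;W)$ is a random matrix with sub-Weibull tails (the polynomial factors in $x$ are the source of the $\sigma_1^{2p}$ dependence), so a truncated matrix Bernstein inequality bounds $\|\nabla^2 f(\bX,W)-\nabla^2 f_D(W)\|$ by a term of order $\sqrt{d\,\text{poly}(\cdots)/n}$. To make this uniform over the ball I would place an $\epsilon$-net on $\{W:\|W-W^*\|\le r\}$, union-bound over the net using the $d^{-\Omega(t)}$ tail, and close the gap between net points via the Lipschitz estimate from the previous step. Choosing $n>d\cdot\text{poly}(\log d,t,k,\nu,\gamma,\sigma_1^{2p}/\rho)$ forces the concentration error below half the population lower bound, so combining with the population bounds via the triangle inequality yields the claimed sandwich for $\nabla^2 f(\bX,W)$ with probability at least $1-d^{-\Omega(t)}$. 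I expect the two most demanding steps to be the Hermite-based lower bound on the smallest eigenvalue of the population Hessian, where the dependence on $\rho$, $\kappa$ and $\gamma$ is forged, and the uniform matrix concentration over the neighborhood with heavy-tailed polynomial features.
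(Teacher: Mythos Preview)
The paper does not prove this lemma at all: it is stated as a direct citation of \citet{zhong2017recovery} and used as a black box to justify the local strong convexity assumption (\cref{asst:lip}). There is therefore no ``paper's own proof'' to compare against.

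That said, your sketch is a faithful outline of the argument actually carried out in \citet{zhong2017recovery}: compute the population Hessian at $W^*$ where the residual vanishes, lower-bound its smallest eigenvalue via the Hermite structure of the activation (this is where $\rho(\sigma_k)$, $\kappa$, $\gamma$ enter), perturb to the ball by Lipschitzness of $W\mapsto\nabla^2 f_D(W)$, and pass to the empirical Hessian by matrix Bernstein plus an $\epsilon$-net. Two minor cautions if you pursue this: the probability in the statement should read $1-d^{-\Omega(t)}$ rather than $1-d^{\Omega(t)}$ (a typo in the paper you should not replicate), and for ReLU the per-sample second derivative is not a function but a distribution, so the Lipschitz perturbation step must be done after taking the Gaussian expectation, exactly as you note.
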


\section{Appendix. Additional Experiment Results}

In this section, we include experiment results in different settings. 

\subsection{Effect of stopping time when training the neural networks}

To look into the effects of stopping time of the full-data model, we increase the maximum number of epochs when training  neural network models in all three mentioned methods (including all leave-one-out models in the jackknife+ method). 

We generate the data in the same way as described in 
 \cref{sec:simulation}
 with  dimension  $p=100$, \ie
$X_i \sim \cN(0, 5\cdot\mathbb{I}_{p}),~ i\in [N]$, where $N=5000$, and 
\begin{equation}
    Y_i = \sqrt{\ReLU(X_i^\top \beta)} + \epsilon_i, \text{ where } \epsilon_i \sim \cN(0, 0.5), ~
    \beta \sim \text{Beta(1.0, 2.5)}.
\end{equation}
We construct the prediction intervals using jackknife+, lazy finetune and DP-Lazy PI using the training and calibration with $n=100$ and $\alpha=0.1$. When training the neural network with $(64, 64)$ hidden layers, we increase the maximum number of epochs from $10$ (as in  \cref{sec:simulation}) to $20$ to avoid the early stopping of the NN training, while we keep the other parameters unchanged: $\lambda = 10$ and batch size $= 10$.

\begin{figure}[ht]
    \includegraphics[width=\textwidth]{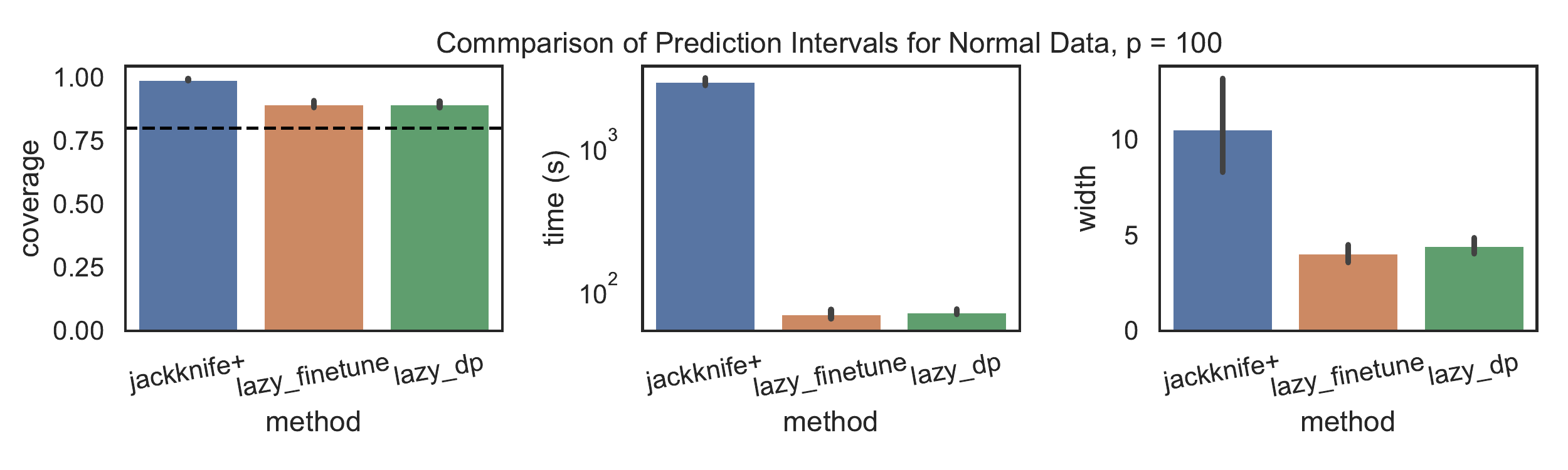}
    \caption{The average coverage, average computing time and width of the prediction intervals with $\alpha=0.1$ in 15 trials, with the error bars to show the $\pm$ standard errors on the simulation data with the dimension $p=100$. The dotted line in the coverage panel indicates the target coverage level $80\%$. When training the (64, 64) hidden layer neural network, we take $\lambda=10$, batch size = $10$ and \textbf{max \# epochs = $20$}. The vertical axis in second panel is presented in log scale. }\label{fig:larger_epoch}
\end{figure}

 The results in \cref{fig:larger_epoch} demonstrate that lazy methods reliably maintain a coverage above $80\%$, exhibit significantly faster computation times compared to jackknife+, and yield more accurate prediction intervals. Moreover, an examination of Figure 2(b) reveals that increasing the maximum number of epochs negatively impacts jackknife+'s performance while enhancing the accuracy of lazy methods. Since lazy methods require training the full model only once, increasing the stopping time has minimal impact on computation time. In contrast, Jackknife+ experiences a dramatic increase in computing time due to the multiplication of training time by the training sample size. This disparity makes the computation more prohibitive for jackknife+.
 In terms of the interval width, the results depicted in the right panel \cref{fig:larger_epoch} highlight the superior interval width accuracy achieved by lazy methods.

 This result also reconciles the performance of simulated data and real data experiments that for high-dimension data, jackknife+ tends to give a much wider and more unstable prediction interval than DP-Lazy PI.

 We also investigate the effect of increasing the maximum number of epochs on real data sets, specifically the BlogFeedback data set and Medical Expenditure Panel Survey 2016 data set. The results demonstrate a similar phenomenon as depicted in Figure \ref{fig: real data epoch20}, highlighting extended training epochs on the performance of the models make the jackknife+ prediction intervals worse(wider).
 
 \begin{figure}[ht]
    \centering
    \begin{tabular}{{@{}c@{}}}
        \includegraphics[width=0.95\textwidth]{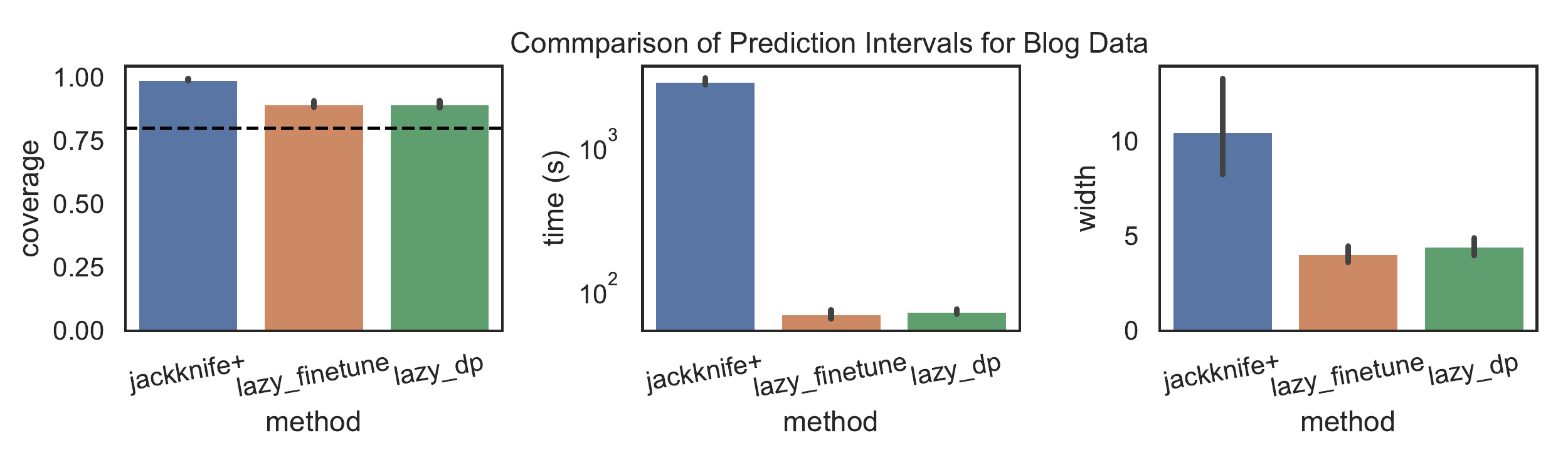}\\
        \small{(a) BlogFeedback data set, max \#epochs=20}
    \end{tabular}
    \begin{tabular}{{@{}c@{}}}
    \includegraphics[width = 0.95\textwidth]{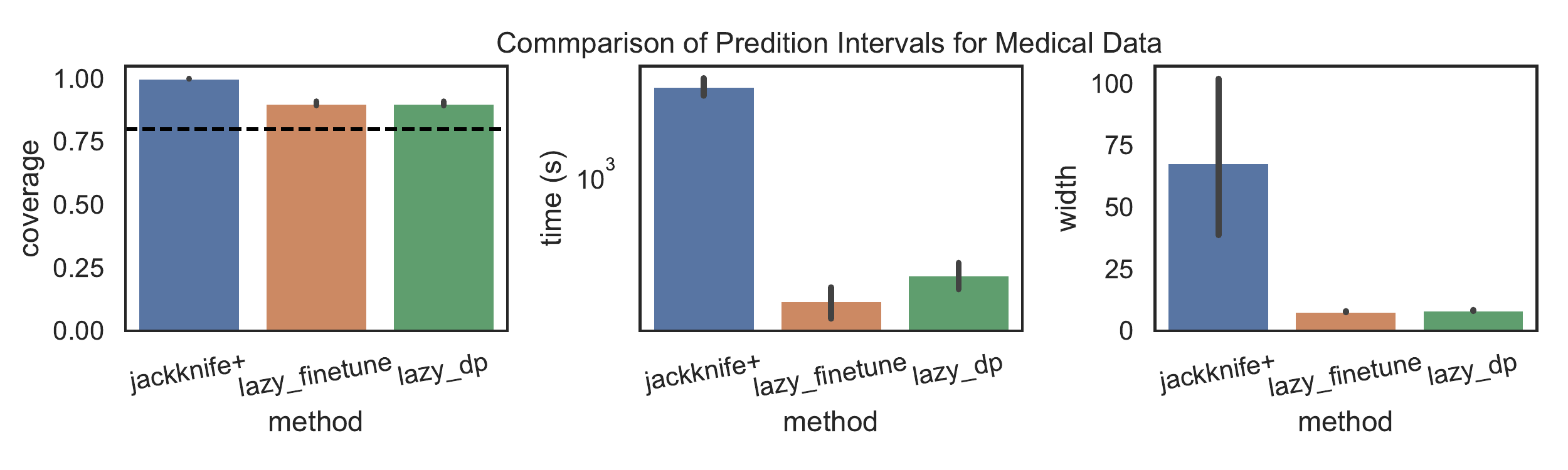}\\
    \small{(b) Medical Expenditure Panel Survey 2016 data set, max \#epochs=20}
    \end{tabular}
    \caption{The coverage, average computing time and width of the prediction intervals on real data sets with $\alpha = 0.1$. The dotted line in the left panels in (a) and (b) is the target $80\%$ coverage. Other parameters remain the same as Figure 3 except the max epoch changes from 10 to 20.
    }\label{fig: real data epoch20}
\end{figure}

\subsection{Effect of batch size when training the neural networks}

All the experiments in the main paper have the batch size $n_{\text{batch}} = 10$ given the training size $n=100$ when training the neural networks. In this section, we give the real data experiment results when the batch size is set to be $n_{\text{batch}}=1$ and compare the results from different methods.

As the batch size set to be 1 when training the neural network, \cref{fig: real data batch1} shows that jackknife+ gets worse (wider) prediction intervals compared to Figure 3, due to the more severe overfitting problem.

\begin{figure}[ht]
    \centering
    \begin{tabular}{{@{}c@{}}}
        \includegraphics[width=0.95\textwidth]{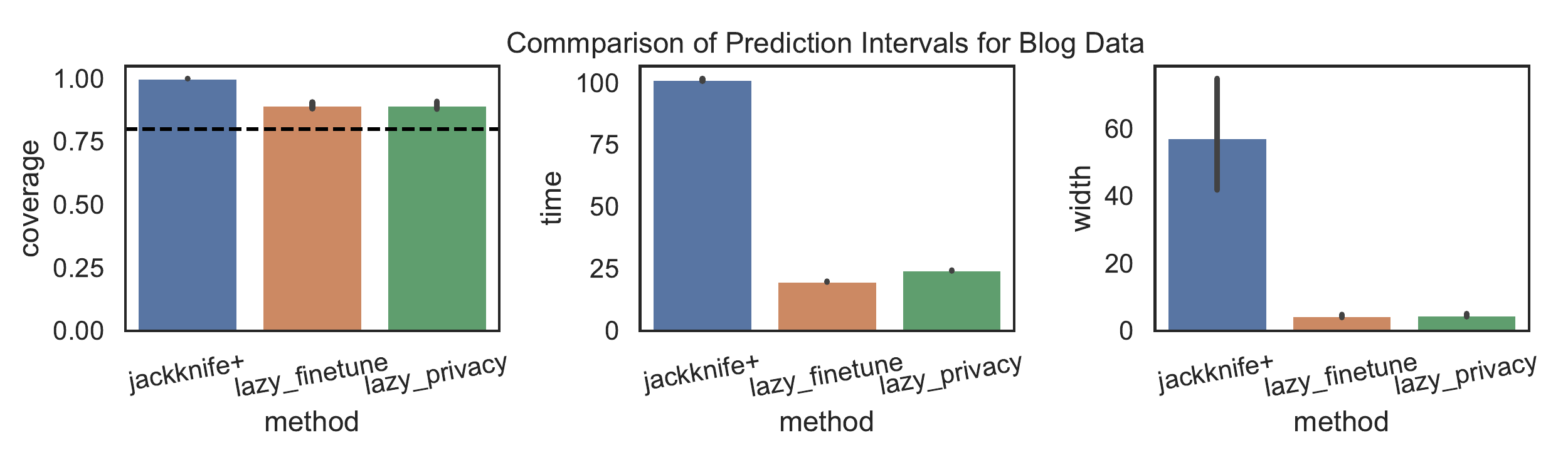}\\
        \small{(a) BlogFeedback data set, batch size=1}
    \end{tabular}
    \begin{tabular}{{@{}c@{}}}
    \includegraphics[width = 0.95\textwidth]{figure/med_e10_b10.pdf}\\
    \small{(b) Medical Expenditure Panel Survey 2016 data set, batch size=1}
    \end{tabular}
    \caption{The coverage, average computing time and width of the prediction intervals on real data sets with $\alpha = 0.1$. The dotted line in the left panels in (a) and (b) is the target $80\%$ coverage. Other parameters remain the same as Figure 3 except the batch size changes to 1 and max number of epochs changes to 2.
    }\label{fig: real data batch1}
\end{figure}

\end{document}